\theoremstyle{plain}
\newtheorem{theorem}{Theorem}
\newtheorem*{theorem*}{Theorem}
\newtheorem{lemma}{Lemma}
\newtheorem{assumption}{Assumption}
\title{Online Iterative Self-Alignment for Radiology Report Generation}
\author{
    Ting Xiao\textsuperscript{\rm 1},
    Lei Shi\textsuperscript{\rm 1},
    Yang Zhang\textsuperscript{\rm 2},
    HaoFeng Yang\textsuperscript{\rm 1},
    Zhe Wang\textsuperscript{\rm 1}\thanks{Corresponding author.},
    Chenjia Bai$^{3*}$ \\
    \textsuperscript{\rm 1}East China University of Science and Technology, 
    \textsuperscript{\rm 2}Tsinghua University, \\
    \textsuperscript{\rm 3}Institute of Artificial Intelligence (TeleAI), China Telecom \\
    \begin{tabular}{ccc}
        xiaoting@ecust.edu.cn & y80230058@mail.ecust.edu.cn & z-yang21@mails.tsinghua.edu.cn \\
        Y80240086@mail.ecust.edu.cn & wangzhe@ecust.edu.cn & baicj@chinatelecom.cn
    \end{tabular}
}
\begin{document}
\maketitle
\begin{abstract}
   Radiology Report Generation (RRG) is an important research topic for relieving radiologists' heavy workload. Existing RRG models mainly rely on supervised fine-tuning (SFT) based on different model architectures using data pairs of radiological images and corresponding radiologist-annotated reports. Recent research has shifted focus to post-training improvements, aligning RRG model outputs with human preferences using reinforcement learning (RL). However, the limited data coverage of high-quality annotated data poses risks of overfitting and generalization. This paper proposes a novel Online Iterative Self-Alignment (OISA) method for RRG that consists of four stages: self-generation of diverse data, self-evaluation for multi-objective preference data, self-alignment for multi-objective optimization and self-iteration for further improvement. Our approach allows for generating varied reports tailored to specific clinical objectives, enhancing the overall performance of the RRG model iteratively. Unlike existing methods, our framework significantly increases data quality and optimizes performance through iterative multi-objective optimization. Experimental results demonstrate that our method surpasses previous approaches, achieving state-of-the-art performance across multiple evaluation metrics.
\end{abstract}

\section{Introduction}\label{section_1}

Radiology Report Generation (RRG) aims to automatically generate free-text descriptions of radiology images by summarizing visual content and clinical insights. Due to its great potential for alleviating radiologists’ workload, many works have been proposed to perform supervised fine-tuning (SFT) for RRG on data pairs $(x, y)$ (where $x$ denotes the radiological image and $y$ represents the corresponding report annotated by radiologists) by refining different network architectures \cite{R2Gen, R2GenCMN, MAN} or incorporating external knowledge from knowledge graphs \cite{KIUT, jin2024promptmrg}, or fine-tuning from large vision/language models \cite{MedVersa, CheXagent}. However, due to the limited scale of the high-quality annotated data, such approaches raise concerns about overfitting and generalization beyond the dataset.

Recent research has shifted its focus to the post-training stage, improving the capabilities of existing RRG models by aligning their outputs with human preferences. For instance, CMN+RL \cite{qin2022reinforced}, \citet{delbrouck2022improving} and MPO \cite{MPO} utilize signals from natural language generation (NLG) metrics, entities and relationships, or clinical efficacy (CE) metrics as reward functions, applying reinforcement learning (RL) to align the RRG model. However, this RL alignment process may still be constrained by the data coverage of the training set \citep{xiong2024iterative}.
Alternatively, \citet{hein2024preference} construct a preference dataset using a strong 8B foundation model (i.e., CheXagent \cite{CheXagent}) and obtains preference labels via an LLM-based scoring model, GREEN \cite{GREEN}. Although such a method performs well, it requires a large foundation model and is limited to offline alignment on a fixed preference dataset.

Driven by this, we raise a question: ``\textit{\textbf{Is it possible for a lightweight RRG model to obtain a strong performance with the data generated by itself, breaking free from the limitation of the fixed dataset?}} '' To achieve this, we propose an Online Iterative Self-Alignment (OISA) method for RRG. OISA contains four steps: \emph{self-generation} to get diverse and unlimited data, \emph{self-evaluation} to obtain multi-objective preferences data, \emph{self-alignment} for multi-objective optimization, and \emph{self-iteration} of the above three steps to improve the performance of the RRG model further. Specifically,
1) Self-generation adopts a one-hot weight vector as a condition for the RRG model to represent radiologists' inherently heterogeneous and multi-objective inclination (e.g., report fluency, clinical accuracy). By changing the weight condition, the RRG model can generate diverse reports dedicated to specific objectives. 
2) Self-evaluation proposes a new preference data construction process, which includes data deduplication, evaluation, and stratified sampling to automatically construct a multi-objective preference dataset.
3) Self-alignment applies the Multi-Objective Direct Preference Optimization (MODPO) \cite{MODPO} algorithm based on the collected multi-objective preference datasets to optimize the initial RRG model, thereby improving the performance for multiple alignment objectives. 
4) Self-iteration uses the updated RRG model to generate preference data with higher quality and diversity. The three processes continuously improve the multi-objective alignment performance for the RRG model through iterations. 

Unlike previous post-training methods \cite{hein2024preference, MPO} in RRG, our method greatly extends the data coverage of the offline dataset via an iterative process and improves the performance of the RRG model via multi-objective preference optimization. Our main contributions are summarized as follows: 
(\romannumeral1) We propose a multi-objective RRG policy to generate multi-objective preference datasets in multiple rounds, which addresses the data limitation problem in previous methods. 
(\romannumeral2) We propose a self-improving process with the automatically constructed multi-objective preference data to improve the policy iteratively. The quality of the RRG reports can be continuously improved with the updated preference data and the theoretically grounded policy.
(\romannumeral3) Extensive experiments demonstrate that our method outperforms previous methods learned with fixed data or a single objective, and achieves state-of-the-art performance in multiple mainstream evaluation metrics.

\section{Preliminary}

Given an SFT RRG model ${\pi}_\text{sft}$, when prompted with a radiological image $x$, the RRG model ${\pi}_\text{sft}$ generates a response $y$ (i.e., report) via ${\pi}_\text{sft}(y \vert x)$. Based on ${\pi}_\text{sft}$, it is essential to align the model with human/radiologist preference via an RL from human feedback (RLHF) \cite{christiano2017deep} process. In RLHF, Direct Preference Optimization (DPO) is an effective preference learning method. Typically, given an prompt $x$ and an output text response $y$, a language model policy ${\pi}_{\theta}$ produces a conditional distribution ${\pi}_{\theta} (y \vert x)$. An ordinary RLHF method fits a reward model $r(x,y)$ to a human preference dataset $\mathcal{D}$ and then uses RL to optimize the model policy ${\pi}_{\theta}$ to generate responses that assign high rewards without deviating too far from the original reference model policy $\pi_{\mathrm{ref}}$. The overall objective can be formulated as:
\vspace{-7pt}
\begin{equation}\label{eq1}
\max_{\pi_\theta}\mathbb{E}_{x\sim\rho,y\sim\pi_\theta(y\vert x)} \left[r(x,y)-\beta \log \frac{\pi_\theta(y \vert x)}{\pi_{\mathrm{ref}}(y \vert x)}\right],
\end{equation}
where $\rho$ is the distribution prompt $x$ sampled from. $\beta$ is a coefficient that controls the deviation from the reference policy ${\pi}_{\mathrm{ref}}$, i.e., the initial SFT model ${\pi}_{\mathrm{sft}}$. In practice, ${\pi}_{\theta}$ is also initialized to ${\pi}_{\mathrm{sft}}$.

DPO simplifies the above objective by collecting the human preference data $\mathcal{D}$ and deriving an implicit reward function that fits the preference data. Here, $\mathcal{D}=\left\{(x_{i},y_{i}^w,y_{i}^l)\right\}_{i=1}^K$ consists of $K$ preference pairs $y^w$ and $y^l$, which denote the chosen and rejected responses to the same prompt $x$. 
Following the Bradley-Terry model \cite{bradley1952rank}, the probability of obtaining each preference pair is given by: $p(y^w\succ y^l)=\sigma(r(x,y^w)-r(x,y^l))$, where the subscript $i$ is omitted for simplicity, and $\sigma$ is the sigmoid function. In DPO, the objective described in Eqn. (\ref{eq1}) can be learned by applying the following loss over the preference data $\mathcal{D}$ as,
\vspace{-7pt}
\begin{equation}\label{eq2}
\begin{aligned}
 & \mathcal{L}_{\text{DPO}}(\pi_{\theta};\pi_{\mathrm{ref}})=-\mathbb{E}_{(x,y^{w},y^{l})\sim\mathcal{D}} \\
 & \left[\log\sigma\left(\beta \log\frac{\pi_\theta(y^w|x)}{\pi_{\mathrm{ref}}(y^w|x)}-\beta \log\frac{\pi_\theta(y^l|x)}{\pi_{\mathrm{ref}}(y^l|x)}\right)\right].
\end{aligned}
\end{equation}

In this paper, compared with the traditional SFT model ${\pi}_\text{sft}$, we introduce an additional weight vector $\mathbf{w}$ that represents radiologists' inherently heterogeneous and multi-objective inclination and use $\mathbf{w}$ as the conditional input for multi-objective alignment, where $\mathbf{w}=[w_{1},\ldots,w_{N}]$, $\mathrm{s.t.}\sum_{k=1}^{N} w_{k}=1$, $w_k$ is the weight of the $k$-th objective, $N$ is the number of objectives. 
Initially, we set the reference policy of the multi-objective RRG model as ${\pi}_\text{ref}(y|x,\mathbf{w})={\pi}_\text{sft}$, which can generate a report $y$ condition on prompt and weight vector. Then we denote the learned model policy as $\pi_{\theta_\mathbf{w}}(y|x,\mathbf{w})$. For each objective $k$, we use $M_k \in [M_1, \dots, M_N]$ as the evaluation function to determine the corresponding preference label by comparing the generated report with the ground truth report.

\begin{figure*}[!h]
    \centering
    \includegraphics[width=0.9\textwidth]{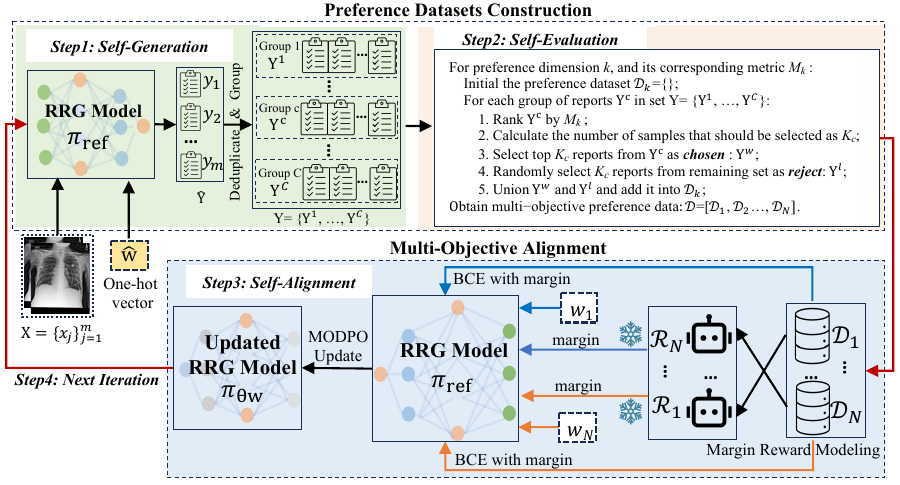}
    \caption{The illustration of the proposed OISA pipeline, comprising the Preference Dataset Construction (PDC) module and the Multi-Objective Alignment (MOA) module. The pipeline involves four steps: self-generation to obtain diverse data, self-evaluation to obtain a multi-objective preference dataset, self-alignment for multi-objective optimization, and self-iteration to further improve the performance of the RRG model.} \label{figure1}
\end{figure*}

\section{Methods}

Figure~\ref{figure1} illustrates the proposed OISA framework, comprising two modules and iterating through four steps to learn an RRG model policy ${\pi}_{{\theta}_{\mathbf{w}}}$. The first module is the Preference Data Construction (PDC), encompassing self-generation and self-evaluation processes. 
In self-generation, for an image prompt $x$, a one-hot weight vector $\mathbf{\hat{w}}$ is used as a condition for the initial RRG model ${\pi}_\text{ref}$ to generate reports tailored to a certain objective and deduplicate the generated reports to ensure data diversity. By adjusting the value of $\mathbf{\hat{w}}$, the RRG model can generate diverse reports tailored to different objectives.
In self-evaluation, a single evaluation metric $M_k \in [M_1, \dots, M_N ]$ serves as the scoring function for the $k$-th objective to construct its preference dataset $\mathcal{D}_{k}$ by stratified sampling. Then, performing a similar process for each objective can automatically construct a multi-objective preference dataset $\mathcal{D}= [ \mathcal{D}_{1},\ldots,\mathcal{D}_{N}]$. 

The second module is the Multi-Objective Alignment (MOA), i.e., a self-alignment process that aligns with multi-objective preferences. With the preference vector $\mathbf{w}$ and the preference data $\mathcal{D}$, a parametrized RRG model ${\pi}_{{\theta}_{\mathbf{w}}}$ is optimized via MODPO. Subsequently, the self-iteration process starts the next round of iteration by setting ${\pi}_\text{ref} \leftarrow {\pi}_{{\theta}_{\mathbf{w}}}$ to generate higher-quality preference data and conduct a new round of multi-objective alignment to further improve the performance of the model.

\subsection{Preference Datasets Construction}\label{section_3_2}

High-quality preference data is crucial for effective preference learning. However, obtaining rankings of reports across multiple dimensions from radiologists can be prohibitively expensive. Fortunately, the field of RRG offers various evaluation metrics, such as RadCliQ \cite{yu2023evaluating} and GREEN \cite{GREEN}, which show a strong correlation with radiologists’ evaluations. This enables us to use radiology-related metrics to represent radiologists' varying preferences and calculate evaluation scores relative to reference reports for ranking. 

In principle, a preference dataset $\mathcal{D}_k$ specific to evaluation metric $M_k$ should be constructed as follows:
1) Generate multiple responses for a prompt $x$ and rank the responses using the scoring metric $M_k$.
2) Select the highest-scoring response as $y^w$ and randomly select a response from the remaining ones as $y^l$. For a given promt set, a preference dataset $\mathcal{D}_k = \{(x_j, y_j^w, y_j^l)\}_{j=1}^{K}$ is obtained via above steps. However, since our lightweight SFT model cannot generate diverse responses for the same prompt, this paper proposes to construct a high-quality multi-dimensional preference dataset through self-generation and self-evaluation.

\textbf{Self-Generation.}
For each preference dimension $k$, we prompted ${\pi}_\text{ref}$ with a prompt set $\mathbf{X}=\{x_j\}_{j=1}^{m}$, and a one-hot weight vector $\mathbf{\hat{w}}_k=[w_{1},\ldots,w_{N}]$ (where $w_k=1$) that fully prefer the radiologist's inclinations for the $k$-th dimension, ${\pi}_\text{ref}$ will generate a report set $\hat{\mathbf{Y}} = \{(x_j, y_j)\}_{j=1}^{m}$ via ${\pi}_\text{ref}(\hat{\mathbf{Y}} \vert \mathbf{X}, \mathbf{\hat{w}}_k)$. 
To ensure data diversity, we deduplicate the report set $\hat{\mathbf{Y}}$ at the patient and disease levels. For reports from the same patient but different views, only the report with the highest BERTscore \cite{BERTScore} is retained. Subsequently, we group the remaining reports by disease label, as identified by CheXbert \cite{CheXbert}, and deduplicate reports within each group (more details about the deduplication are in \textcolor{blue}{Appendix \ref{appendix:pdc}.}), resulting in the candidate report set $\mathbf{Y} =\{\mathbf{Y}^c\}_{c=1}^{C}$, where $C$ is the total number of groups, $\mathbf{Y}^c=\{(x_j, y_j)\}_{j=1}^{N_c}$, and $N_c$ is the total number of samples in $c$-th group.

\textbf{Self-Evaluation.} For the $k$-th preference dimension, we apply evaluation metric $M_k$ as its scoring function, and the corresponding preference dataset $\mathcal{D}_k$ is constructed as follows:
(\romannumeral1) For each grouped report set $\mathbf{Y}^c \in \mathbf{Y}$, we evaluate it with evaluation metric $M_k$;
(\romannumeral2) Calculate the number of samples that should be selected from $\mathbf{Y}^c$, denoted as $K_c$;
(\romannumeral3) Select $K_c$ reports from $\mathbf{Y}^c$ with the highest $M_k$ scores as the \emph{chosen response} set $\mathbf{Y}^w=\{(x_j, y_j^w)\}_{j=1}^{K_c}$, whose corresponding prompt set is $\mathbf{X}^c=\{x_j\}_{j=1}^{K_c}$; 
(\romannumeral4) For each $x \in \mathbf{X}^c$, randomly select one report from the remaining report set ${\mathbf{Y}^c}_{- \mathbf{Y}^w}$ as its \emph{reject response}, resulting in the rejection response set $\mathbf{Y}^l=\{(x_j,y_j^l)\}_{j=1}^{K_c}$;
(\romannumeral5) Union $\mathbf{Y}^w$ and $\mathbf{Y}^l$ by $\mathbf{X}^c$, and add the results into $\mathcal{D}_k$; (\romannumeral6) Perform the above (\romannumeral1-\romannumeral5) steps to all groups will obtain the preference dataset $\mathcal{D}_k = \{(x_j, y_j^w, y_j^l)\}_{j=1}^{K}$. The whole process is summarized in Figure~\ref{figure1}, and the details of how to calculate $K_c$ are in \textcolor{blue}{Appendix \ref{appendix:KC}.}

Performing the above self-generation and self-evaluation processes for all preference dimensions can automatically construct multi-objective preference dataset $\mathcal{D} =[\mathcal{D}_1, \dots, \mathcal{D}_N]$. 

\subsection{Multi-Objective Alignment} \label{section_3_3}

To address the diversity of human preferences, we adopt MODPO \cite{MODPO}, which extends DPO at minimal cost to achieve multi-objective alignment. MODPO integrates a weighted combination of objectives and the training of RRG models into preference learning, consisting of two steps: marginal reward modeling and policy learning.

\textbf{Marginal reward modeling.} For each preference dataset $\mathcal{D}_k$, we parametrize the margin reward model $\mathcal{R}_k$ with $\pi_{{\theta}_\mathbf{w}}$, and train it through the $\mathcal{L}_{\text{DPO}}$ loss in Eqn. (\ref{eq2}). Thus, the marginal reward can be calculated as:
\vspace{-5pt}
\begin{equation}
    \label{reward-margin}
    \mathcal{R}_{k}(x, y) =  {\beta} \log \frac{\pi_{{\theta}_{\mathbf{w}}}(y \vert x,\mathbf{\hat{w}}_k)}{\pi_{\text{ref}}(y \vert x,\mathbf{\hat{w}}_k)}
\end{equation}

\textbf{RRG policy learning.} To align multi-objective preferences, we train the model on preference datasets sequentially. For each preference dimension $k$, we optimize the target policy $\pi_{\theta_{\mathbf{w}}}$ over the preference dataset $\mathcal{D}_k$ using the weight $\mathbf{w}$ with the $\mathcal{L}_{\text{MODPO}}$ loss as follows:

{\small
\begin{equation}
\begin{gathered}
    \mathcal{L}_{\mathrm{MODPO}}(\pi_{\theta_{\mathbf{w}}};\mathcal{R}_{-k},\pi_{\text{sft}},\mathcal{D}_{k})= \\
    -\underset{\mathcal{D}_{k}}{\operatorname*{\mathbb{E}}} \left[ \log \sigma \left( \frac{\beta}{{w}_{k}}  \log\frac{\pi_{\theta_{\mathbf{w}}}(y^{w}|{x},\mathbf{w})}{\pi_{\text{sft}}({y}^{w}|{x},\mathbf{w})} - \frac{\beta}{{w}_{k}} \log \frac{\pi_{\theta_{\mathbf{w}}}({y}^{l}|{x},\mathbf{w})}{\pi_{\text{sft}}({y}^{l}|{x},\mathbf{w})}\right.\right.\\
-\underbrace{ \frac{1}{{w}_{k}}\mathbf{w}_{-k}^{\top} \big( \mathcal{R}_{-k}({x},{y}^{w})-\mathcal{R}_{-k}({x},{y}^{l}) \big) }_{\text{margin,}m_{\phi}({x},{y}^{w},{y}^{l})} \bigg) \bigg],
\end{gathered}
\end{equation}}
where ${w}_k$ represents the $k$-th element of the preference vector $\mathbf{w}$, and $\mathbf{w}_{-k}$ represents all elements of $\mathbf{w}$ except for ${w}_k$; $\mathcal{R}_{-k}$ denotes the marginal reward model excluding $\mathcal{R}_{k}$. 
Then, policy model $\pi_{\theta_{\mathbf{w}}}$ continues to be trained on the preference dataset $\mathcal{D}_{k+1}$ until all preference datasets are traversed.

The loss function $\mathcal{L}_\text{MODPO}$ aligns the model with multi-objective preferences by incorporating preference vectors $\mathbf{w}$ as prompts during training, where each preference dataset $\mathcal{D}_k$ is associated with different weight $\mathbf{w}$ sampled from the weight space. Iterating over the whole weight space and optimize $\mathcal{L}_\text{MODPO}$ for each $\mathbf{w}$ to produce an empirical front ${\pi}_{\theta_{\mathbf{w}}}$ that aligned with the multi-objective preferences encoded in the datasets.

\subsection{Self-Iteration}\label{section_3_4}

After the above three steps, we get the updated model $\pi_{\theta_{\mathbf{w}}}$, and start the self-iteration process by  ${\pi}_\text{ref}\leftarrow{\pi}_{\theta_{\mathbf{w}}}$. The whole process is continuously iterated, which can greatly expand the data coverage of the offline dataset and improve model performance through multi-objective preference optimization. The whole pipeline is as follows:
\vspace{-5pt}
\begin{equation}
    \scalebox{0.6}{$
    \cdots \rightarrow 
    \underbrace{ \hspace{0cm} \{ \pi_\text{ref}^{(i)} \} \xrightarrow{\text{PDC}} 
    \overbrace{\{ \mathcal{D}_{k}^{(i)} \}_{k=1}^{N} \xrightarrow{\text{Eq(2)}} \{ \mathcal{R}_{k}^{(i)} \}_{k=1}^{N} \xrightarrow{\text{Eq(4)}} \{ {\pi}^{(i)}_{\theta_{\mathbf{w}}}\}}^{\text{Training}}}_{{\text{iteration} ~~i}} \rightarrow  \cdots
    $}
\end{equation}

\section{Theoretical Analysis}

We provide a brief theoretical analysis of our proposed method to demonstrate the effectiveness of online iterative self-alignment. Formally, we make the following assumption on the parameterization of the reward on each preference dimension $k$.
\begin{assumption}[Linear Reward]
\label{assump:linear_r}
The reward lies in the family of linear functions $ \scriptstyle r_\theta(x, y) = \langle \theta, \phi(x,y) \rangle = \theta^{\top} \phi(x, y)$ for some known and fixed $\scriptstyle \phi(x, y): \mathcal{X} \times \mathcal{Y} \rightarrow \mathbb{R}^d$ with $\scriptstyle \max_{x, y} \| \phi(x,y) \|_2 \leq 1$. Let $\theta^{\star}$ be the true parameter for the ground-truth reward function. To ensure the identifiability of $\theta^{\star}$, we let $\theta^{\star} \in \Theta_{B}$, where
\begin{align}
    \Theta_{B} = \{ \theta \in \mathbb{R}^d \big| \langle 1, \theta \rangle = 0, \| \theta \|_2 \leq B \}.
\end{align}
\end{assumption}
In our method, we obtain an estimated reward model $\mathcal{R}_k$ reparameterized with Eqn. (\ref{reward-margin}) via maximum likelihood estimation in Eqn. (\ref{eq2}). According to \citet{zhu2023principled}, we can bound the estimation error of $\mathcal{R}_k$ conditioned on the data $\mathcal{D}_k = \{ (x_j, y_j^w, y_j^l )\}_{j = 1}^{K}$.
\begin{lemma}\label{lemma:bound_MLE}
{\rm \citep{zhu2023principled}} For any $\lambda > 0$, letting $\gamma = 1/(2 + e^{-B} + e^{B})$, with probability at least $1 - \delta$, we have
{\small
\begin{align}
    \| \hat{\theta}_{\text{\rm MLE}} - \theta^{\star} \|_{\Sigma_{\mathcal{D}_k} + \lambda I} \leq \varrho := C \cdot \sqrt{\frac{d + \log (\frac{1}{\delta})}{\gamma^2 K} + \lambda B^2}, 
\end{align}}
where {\small $\Sigma_{\mathcal{D}_k} = \frac{1}{K} \sum\limits_{j = 1}^{K}( \phi(x_j, y_j^w) - \phi(x_j, y_j^l) )( \phi(x_j, y_j^w) - \phi(x_j, y_j^l) )^{\top}.$}
\end{lemma}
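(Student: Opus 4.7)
The plan is to follow the standard two-step MLE analysis for Bradley--Terry linear reward models. First, I would encode each preference pair through the feature difference $z_j := \phi(x_j, y_j^w) - \phi(x_j, y_j^l)$, so that under Assumption~\ref{assump:linear_r} the conditional probability of the observed preference equals $\sigma(\theta^{\star\top} z_j)$. The (regularized) MLE $\hat\theta_{\text{MLE}}$ minimizes $\ell_\lambda(\theta) = -\frac{1}{K}\sum_{j=1}^{K} \log \sigma(\theta^\top z_j) + \tfrac{\lambda}{2}\|\theta\|_2^2$ over $\Theta_B$; the objective is smooth and convex and the feasible set is compact, so $\hat\theta_{\text{MLE}}$ is well-defined, and the Bradley--Terry likelihood used in Eqn.~(\ref{eq2}) matches this loss exactly after the reparameterization in Eqn.~(\ref{reward-margin}).

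The second step is to extract a local strong-convexity bound from the Hessian $\nabla^2 \ell_\lambda(\theta) = \frac{1}{K}\sum_{j} \sigma'(\theta^\top z_j)\, z_j z_j^\top + \lambda I$. Since $\|\phi\|_2 \leq 1$ and $\|\theta\|_2 \leq B$ push $\theta^\top z_j$ into a bounded interval, one has $\sigma'(\theta^\top z_j) \geq \gamma = 1/(2 + e^{-B} + e^{B})$ uniformly, giving $\nabla^2 \ell_\lambda(\theta) \succeq \gamma\,\Sigma_{\mathcal{D}_k} + \lambda I$ along the segment joining $\theta^{\star}$ and $\hat\theta_{\text{MLE}}$. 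A Taylor expansion around $\theta^{\star}$ combined with first-order optimality at $\hat\theta_{\text{MLE}}$ yields $\|\hat\theta_{\text{MLE}} - \theta^{\star}\|_{\gamma\Sigma_{\mathcal{D}_k} + \lambda I}^{2} \lesssim \langle \nabla\ell_\lambda(\theta^{\star}),\,\theta^{\star} - \hat\theta_{\text{MLE}}\rangle$, and a dual-norm Cauchy--Schwarz reduces the entire problem to controlling $\|\nabla \ell(\theta^{\star})\|_{(\Sigma_{\mathcal{D}_k} + \lambda I/\gamma)^{-1}}$.

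For the stochastic piece, note that $\nabla \ell(\theta^{\star}) = \frac{1}{K}\sum_{j} \varepsilon_j z_j$, where $\varepsilon_j = \mathbf{1}\{y_j^w \succ y_j^l\} - \sigma(\theta^{\star\top} z_j)$ is bounded in $[-1,1]$, conditionally mean-zero given $z_j$, and hence uniformly sub-Gaussian. A self-normalized vector Bernstein (or matrix-Freedman) inequality, where the regularization $\lambda I$ is precisely what makes the precision matrix invertible, delivers $\|\nabla \ell(\theta^{\star})\|_{(\Sigma_{\mathcal{D}_k} + \lambda I/\gamma)^{-1}}^{2} \lesssim (d + \log(1/\delta))/K$ with probability at least $1 - \delta$. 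Folding in the regularization bias, which is controlled by $\lambda B \|\theta^{\star}\|_2 \leq \lambda B^2$ using the constraint $\|\theta^{\star}\|_2 \leq B$, produces the stated rate $\varrho$.

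\textbf{Main obstacle.} The principal difficulty is threading the constant $\gamma$ through both the curvature lower bound and the gradient concentration without picking up an additional $1/\gamma$ factor, so that the final rate scales as $1/\gamma^{2}$ inside the square root rather than worse; a related subtlety is that $\Sigma_{\mathcal{D}_k}$ may be rank-deficient --- the identifiability constraint $\langle \mathbf{1}, \theta\rangle = 0$ on $\Theta_B$ already forces a direction of degeneracy --- which is exactly why the weighted seminorm is measured with respect to $\Sigma_{\mathcal{D}_k} + \lambda I$ and why the $\lambda B^2$ summand is unavoidable. Getting both ingredients to line up with the sharp constants is the content of~\citet{zhu2023principled}, which I would invoke as a black box in the proof.
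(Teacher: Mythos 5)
The paper gives no proof of this lemma at all --- it is imported verbatim from \citet{zhu2023principled} --- so your closing move of invoking that reference as a black box is exactly the paper's own treatment. Your sketch of the underlying argument (Bradley--Terry likelihood on feature differences, a curvature lower bound $\gamma$ from $\sigma'$ on the bounded interval $|\theta^\top z_j|\le 2B$, a self-normalized concentration bound of order $(d+\log(1/\delta))/K$, and the $\lambda B^2$ slack absorbing the $\lambda I$ augmentation of the seminorm) is a faithful reconstruction of that reference's proof, with only the cosmetic difference that in \citet{zhu2023principled} the $\lambda B^2$ term arises from bounding $\|\hat{\theta}_{\text{MLE}}-\theta^\star\|_{\lambda I}^2\le 4\lambda B^2$ over the constrained set $\Theta_B$ rather than from an explicit ridge penalty in the MLE objective.
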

Given a preference weight vector $\mathbf{w} \in \mathbb{R}^{N}$ and a set of reward models over each preference dimension $\mathbf{r} = [\mathcal{R}_1, \ldots, \mathcal{R}_N]^\top$, the objective during RL tuning phase in multi-objective RLHF is,
\begin{align}
    J(\pi_\mathbf{w}, \mathbf{r}) = \mathbb{E}_{x, y} \left[ \mathbf{w}^\top \mathbf{r}(x, y) - \beta \log \frac{\pi_{\mathbf{w}}(y|x)}{\pi_{\text{ref}}(y|x)} \right].
\end{align}
where $x \sim \rho$ and $y \sim \pi_{\mathbf{w}}(y|x)$. For clarity, we define RL tuning objective $J(\pi_\mathbf{w}, \hat{\boldsymbol{\theta}}_{\text{MLE}})$ with MLE estimated reward $\hat{\boldsymbol{\theta}}_{\text{MLE}} = [\hat{\theta}_{1}, \ldots, \hat{\theta}_{N}]$ and $J(\pi_\mathbf{w}, \boldsymbol{\theta}^{\star})$ with ground-truth reward $\boldsymbol{\theta}^{\star} = [\theta^\star_1, \ldots, 
\theta^\star_N]$ respectively, as
{\small
\begin{align*}
    J(\pi_\mathbf{w}, \hat{\boldsymbol{\theta}}_\text{MLE}) &= \mathbb{E}_{x, y} \left[ \mathbf{w}^\top \hat{\boldsymbol{\theta}}_{\text{MLE}}^\top \phi(x, y) - \beta \log \frac{\pi_{\mathbf{w}}(y|x)}{\pi_{\text{ref}}(y|x)} \right], \\
    J(\pi_\mathbf{w}, \boldsymbol{\theta}^\star) &= \mathbb{E}_{x, y} \left[ \mathbf{w}^\top {\boldsymbol{\theta}^{\star}}^\top \phi(x, y) - \beta \log \frac{\pi_{\mathbf{w}}(y|x)}{\pi_{\text{ref}}(y|x)} \right].
\end{align*}}

Now we are ready to analyze the sub-optimality gap of the optimal policy $\hat{\pi}_\mathbf{w}$ derived from optimizing $J(\pi_\mathbf{w}, \hat{\boldsymbol{\theta}}_\text{MLE})$. For the output policy $\hat{\pi}_\mathbf{w} = \mathop{\arg\max}_\pi J(\pi_\mathbf{w}, \hat{\boldsymbol{\theta}}_\text{MLE})$, we have the following theoretical guarantee.
\begin{theorem}\label{thm:subop_gap}
For any $\lambda > 0$, $\beta > 0$, with probability at least $1 - \delta$, the optimal policy $\hat{\pi}_\mathbf{w}$ w.r.t. the objective $J(\pi_\mathbf{w}, \hat{\boldsymbol{\theta}}_\text{MLE})$ satisfies,
{\small
\begin{align}
    \text{\rm SubOpt} (\hat{\pi}_\mathbf{w}) \leq 2\varrho \cdot \sum_{k=1}^N w_k \| \mathbb{E}_{x \sim \rho} [ \phi(x, \pi^\star(x)) ] \|_{(\Sigma_{\mathcal{D}_k} + \lambda I)^{-1}},
\end{align}}
where $\pi^\star = \mathop{\arg\max}_{\pi} J(\pi_\mathbf{w}, \boldsymbol{\theta}^\star)$ is the optimal policy w.r.t. the ground-truth reward model $\boldsymbol{\theta}^\star$.
\end{theorem}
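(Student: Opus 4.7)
The plan is to follow the standard pessimism-free decomposition of the sub-optimality gap, reduce it to a reward-estimation error on linear features, and then close the bound using Lemma~\ref{lemma:bound_MLE}. First I would write
\begin{align*}
    \text{SubOpt}(\hat{\pi}_\mathbf{w})
    &= J(\pi^\star, \boldsymbol{\theta}^\star) - J(\hat{\pi}_\mathbf{w}, \boldsymbol{\theta}^\star) \\
    &= \bigl[J(\pi^\star, \boldsymbol{\theta}^\star) - J(\pi^\star, \hat{\boldsymbol{\theta}}_\text{MLE})\bigr]
    + \bigl[J(\pi^\star, \hat{\boldsymbol{\theta}}_\text{MLE}) - J(\hat{\pi}_\mathbf{w}, \hat{\boldsymbol{\theta}}_\text{MLE})\bigr]
    + \bigl[J(\hat{\pi}_\mathbf{w}, \hat{\boldsymbol{\theta}}_\text{MLE}) - J(\hat{\pi}_\mathbf{w}, \boldsymbol{\theta}^\star)\bigr]
\end{align*}
and invoke the optimality of $\hat{\pi}_\mathbf{w}$ under $\hat{\boldsymbol{\theta}}_\text{MLE}$ to drop the middle bracket, as it is non-positive. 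This leaves only two differences, each of which compares the value of a \emph{fixed} policy under the MLE reward vs.\ the true reward.

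Next I would exploit the fact that the KL regularization term in $J(\pi, \boldsymbol{\theta})$ depends only on $\pi$ and $\pi_\text{ref}$, not on $\boldsymbol{\theta}$. Hence, for any fixed policy $\pi$,
\[
J(\pi, \boldsymbol{\theta}) - J(\pi, \boldsymbol{\theta}')
= \mathbb{E}_{x \sim \rho,\, y \sim \pi}\!\left[ \mathbf{w}^\top (\boldsymbol{\theta} - \boldsymbol{\theta}')^\top \phi(x, y) \right]
= \sum_{k=1}^N w_k\, (\theta_k - \theta_k')^\top\, \mathbb{E}_{x \sim \rho}[\phi(x, \pi(x))].
\]
Applied to the two remaining brackets with $\pi = \pi^\star$ and $\pi = \hat{\pi}_\mathbf{w}$ respectively, the sub-optimality gap is reduced to a weighted sum of inner products between the MLE error $\theta_k^\star - \hat{\theta}_k$ and the expected feature under each policy.

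I would then apply the standard matrix-norm Cauchy–Schwarz,
\[
|(\theta_k^\star - \hat{\theta}_k)^\top v|
\leq \| \theta_k^\star - \hat{\theta}_k \|_{\Sigma_{\mathcal{D}_k} + \lambda I}\,
\| v \|_{(\Sigma_{\mathcal{D}_k} + \lambda I)^{-1}},
\]
and plug in Lemma~\ref{lemma:bound_MLE} to upper bound $\|\theta_k^\star - \hat{\theta}_k\|_{\Sigma_{\mathcal{D}_k} + \lambda I}$ by $\varrho$ simultaneously for all $k$ via a union bound over the $N$ preference dimensions (absorbed into the $\log(1/\delta)$ term). Summing the $\pi^\star$-term and the $\hat{\pi}_\mathbf{w}$-term yields the factor of $2$ appearing in front of $\varrho$.

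The main obstacle is the asymmetry in the final bound: the decomposition naturally produces two feature-norms, one under $\pi^\star$ and one under $\hat{\pi}_\mathbf{w}$, whereas the statement only mentions $\pi^\star$. To reconcile this, I would invoke a data-coverage argument in the spirit of \citet{zhu2023principled}: under the regime where the preference dataset adequately covers the features visited by the optimal policy, $\mathbb{E}_{x \sim \rho}[\phi(x, \hat{\pi}_\mathbf{w}(x))]$ can be controlled by $\mathbb{E}_{x \sim \rho}[\phi(x, \pi^\star(x))]$ in the $(\Sigma_{\mathcal{D}_k} + \lambda I)^{-1}$-norm (equivalently, $\hat{\pi}_\mathbf{w}$ is implicitly KL-anchored to $\pi_\text{ref}$, so it cannot drift to regions uncovered by $\pi^\star$'s support). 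This coverage / anchoring step is the delicate part of the argument and is what makes the clean $2\varrho \sum_k w_k \|\mathbb{E}_{x \sim \rho}[\phi(x, \pi^\star(x))]\|_{(\Sigma_{\mathcal{D}_k} + \lambda I)^{-1}}$ bound valid; once it is in place the rest of the proof is a routine chain of inequalities.
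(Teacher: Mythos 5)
Your proposal follows essentially the same route as the paper's proof: the identical three-term decomposition (with the middle bracket dropped by optimality of $\hat{\pi}_\mathbf{w}$ under $\hat{\boldsymbol{\theta}}_\text{MLE}$), Cauchy--Schwarz plus Lemma~\ref{lemma:bound_MLE} on the two remaining reward-error terms, and the same informal coverage argument to dominate the $\hat{\pi}_\mathbf{w}$ feature-norm by the $\pi^\star$ one (the paper justifies this via the dataset being generated by the previous iteration's policy, Eqn.~(\ref{equ:measurement_diff})). The only cosmetic difference is that the paper first proves a single-objective lemma and then sums over dimensions with weights $w_k$, whereas you decompose directly at the multi-objective level; your explicit mention of a union bound over the $N$ dimensions is in fact slightly more careful than the paper's treatment.
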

The proof is deferred to \textcolor{blue}{Appendix~\ref{appendix:proof_of_subop_gap}}. Here the term $\| \mathbb{E}_{x \sim \rho} [ \phi(x, \pi^\star(x)) ] \|_{(\Sigma_{\mathcal{D}_k} + \lambda I)^{-1}}$ can be interpreted as a measure of how well the current dataset $\mathcal{D}_k$ covers the distribution of responses generated by the target policy $\pi^\star$.
Theorem~\ref{thm:subop_gap} implies that when MODPO in each iteration guides the output policy $\hat{\pi}_\mathbf{w}$ towards the optimal policy $\pi^\star$, adopting an online iterative MODPO paradigm can tighten the bound by collecting a new preference dataset with higher quality generated by the new policy, thereby enjoying a theoretically grounded improvement. Besides, we emphasize that although we start by introducing a relatively simple linear reward assumption, the theoretical results are also ready to be extended to general function approximation \citep{chen2022human, wang2023rlhf}.

\section{Experiments}

\subsection{Datasets and Experiment Setting}\label{section_4_1}

\textbf{Datasets.} 
We evaluate our method on two public datasets, MIMIC-CXR \cite{johnson2019mimic} and IU-Xray \cite{demner2016preparing}. MIMIC-CXR is the largest public dataset for RRG, containing 337,110 chest X-ray images and 227,835 corresponding reports. Following the official split, MIMIC-CXR is randomly split into 7:1:2 for train, val, and test. We use the training set of MIMIC-CXR to construct preference datasets and test our model on its test set. IU-Xray consists of 7,470 chest X-ray images, accompanied by 3,955 reports. For IU-Xray, we follow PromptMRG \cite{jin2024promptmrg} to test our model on the entire IU-XRay set.

\textbf{Evaluation Metrics.} 
We evaluate our model by comparing the generated report with the corresponding ground truth report using seven metrics from two domains: NLG and radiology. For NLG metrics, we include BLEU1, BLEU4 \cite{DBLP:conf/acl/PapineniRWZ02} and BERTScore \cite{BERTScore}. For radiology metrics, we consider GREEN \cite{GREEN}, RadGraphF1 \cite{jain2021radgraph}, CheXbertF1 \cite{CheXbert} and RadCliQ \cite{yu2023evaluating}. For all metrics, except RadCliQ, a higher value signifies better performance. A detailed explanation of each metric is in \textcolor{blue}{Appendix~\ref{appendix:metrics}}.

\textbf{Implementation details.} 
We set PromptMRG \cite{jin2024promptmrg} as our baseline model and conduct three rounds of iterations, each consisting of 60 epochs. In each round, the preference dataset is constructed based on the model trained in the previous round, conditioned on a one-hot weight vector. We use three radiology metrics, RadCliQ, RadGraphF1, and GREEN to represent different objectives, resulting in a preference dataset with $N=3$ objectives, $\mathcal{D} =[\mathcal{D}_\text{RadCliQ}, \mathcal{D}_\text{RadGraphF1}, \mathcal{D}_\text{GREEN}]$ with $K = 10,000$ pair of data in each dataset. During training, we apply different $\mathbf{w}$ values to produce well-distributed fronts that interpolate various objectives, with the sampling space for $w$ in each dimension set to $\{0.2, 0.4, 0.6, 0.8, 1.0\}$. For $\beta$, we conduct a hyperparameter analysis in \textcolor{blue}{Appendix~\ref{appendix:hyperparameter_beta}} and set $\beta=0.5$.
More implementation details are in \textcolor{blue}{Appendix~\ref{appendix:hyperparameter}}. Computing cost analysis is provided in \textcolor{blue}{Appendix~\ref{appendix:results of effciency}}.

\begin{figure}[!htp]
    \centering
    \includegraphics[width=0.99\linewidth]{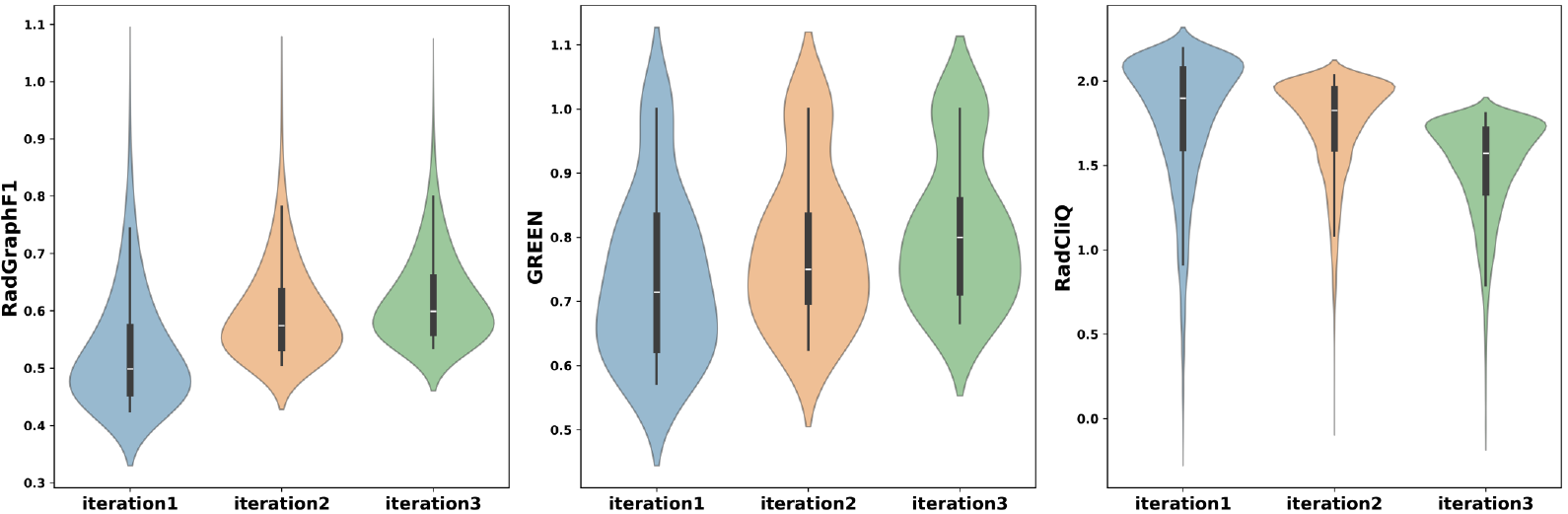}
    \caption{The distribution of preference dataset on different evaluation metrics in three iterations.}\label{fig_pdc}
\end{figure}

\subsection{Results and Analysis}
\textbf{Preference data.}
To verify our pipeline can improve the quality of preference data with each iteration, we provide the violin plots to compare the distribution of the preference data generated in three iterations across three evaluation metrics, RadGraphF1 ($\uparrow$), GREEN($\uparrow$), and RadCliQ($\downarrow$). Since $y^l$ is randomly selected in the PDC module, Figure \ref{fig_pdc} only shows the results of the chosen responses $y^w$. Figure \ref{fig_pdc} shows that with each round of iteration, all quantiles of RadGraphF1 and GREEN gradually increase, while the quantiles of RadCliQ gradually decrease. This demonstrates that as the number of iterations increases, the quality of the preference data improves across all evaluation metrics. 

\textbf{Alignment with different preferences.}
To verify the effectiveness of our proposed method, we test the models obtained at each iteration under four specific preference weight configurations, where $w_1$, $w_2$, and $w_3$ representing the weights of objective represented by RadCliQ($\downarrow$), RadGraphF1, and GREEN, respectively. In each iteration, the first three rows represent the results of fully preferring one of the objectives, while the last row assigns equal weights to all objectives. 

Table~\ref{tab_pref_cxr} and Table~\ref{tab_pref_iu} show the results on the MIMIC-CXR and Iu-Xray dataset. For MIMIC-CXR dataset, we have the following observations:
1) The model outperforms the baseline model across all weight configurations in each iteration.
2) In each iteration, when an objective is fully preferred, the corresponding metric achieves the best performance. When the weights are equally distributed among objectives, the model attains the second-best results across all metrics, indicating that our method can align multiple objectives through weight conditions.
3) The performance trends of BLUE and BERTScore align with those of RadCliQ, as RadCliQ is a linear combination of these metrics. Similarly, the trend of ChexbertF1 correlates with RadGraphF1 due to their similar calculation methods.
4) For the same weight configuration, all metrics show a gradual increase with each iteration. For instance, from iteration 1 to iteration 3, the best results for RadGraphF1, GREEN, and RadCliQ improved from 0.247, 0.326, and 2.62 to 0.273, 0.341, and 2.54, respectively on MIMIC-CXR dataset and from 0.285, 0.482, and 2.57 to 0.308, 0.527, and 2.51, respectively on IU-Xray dataset with the best performance achieved in the third iteration. This demonstrates that our method can continuously enhance the overall performance through multiple iterations. 
5) Similar observations can be drawn on the IU-Xray dataset.

\begin{figure*}[ht]
  \centering
  \begin{subfigure}[c]{0.32\textwidth}
    \centering
    \includegraphics[width=\textwidth]{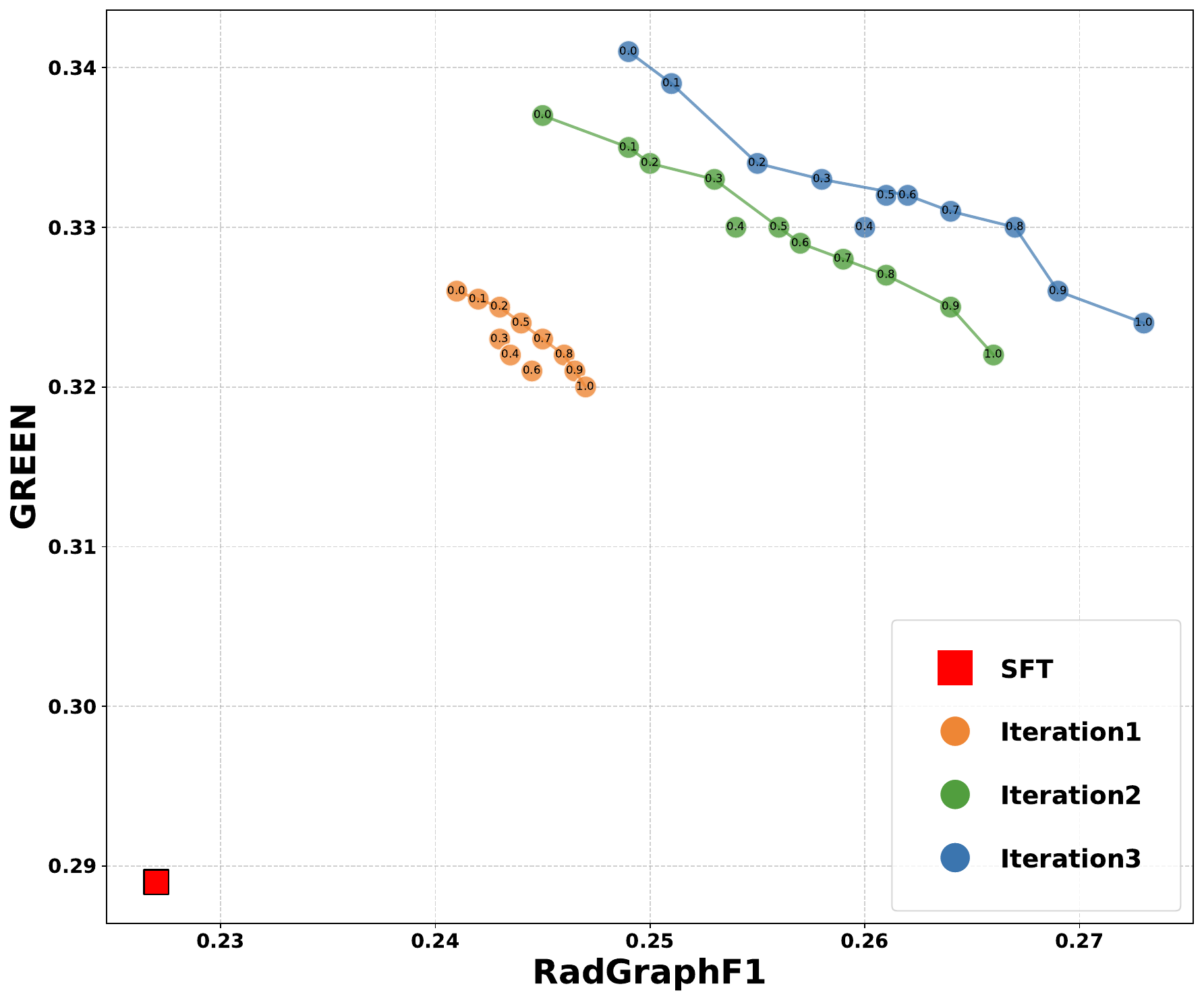}
    \caption{RadGraphF1 and GREEN}
    \label{fig:first_image}
  \end{subfigure}
  \hfill
  \begin{subfigure}[c]{0.32\textwidth}
    \centering
    \includegraphics[width=\textwidth]{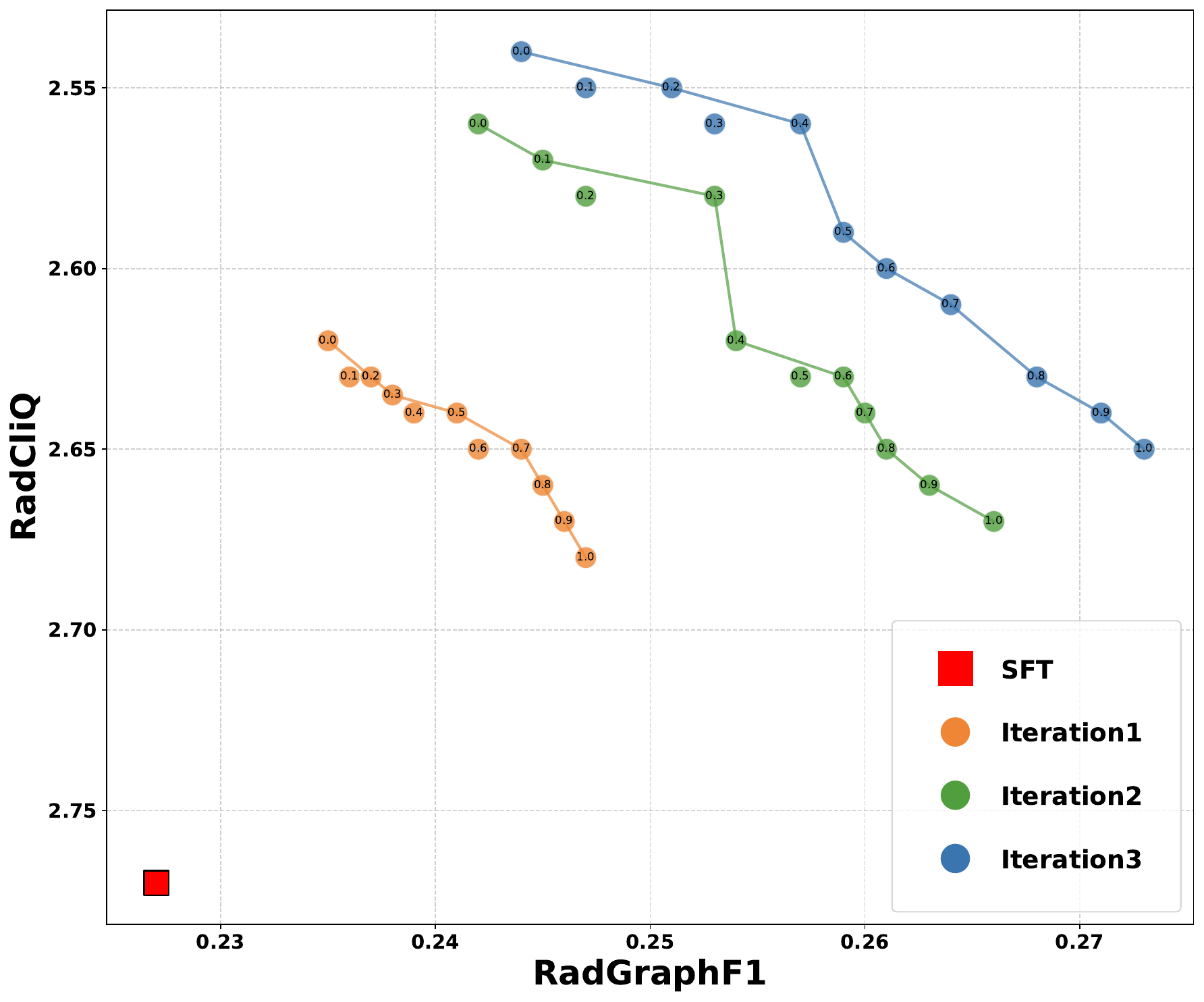}
    \caption{RadGraphF1 and  RadCliQ (reverse)}
    \label{fig:second_image}
  \end{subfigure}
  \hfill
  \begin{subfigure}[c]{0.32\textwidth}
    \centering
    \includegraphics[width=\textwidth]{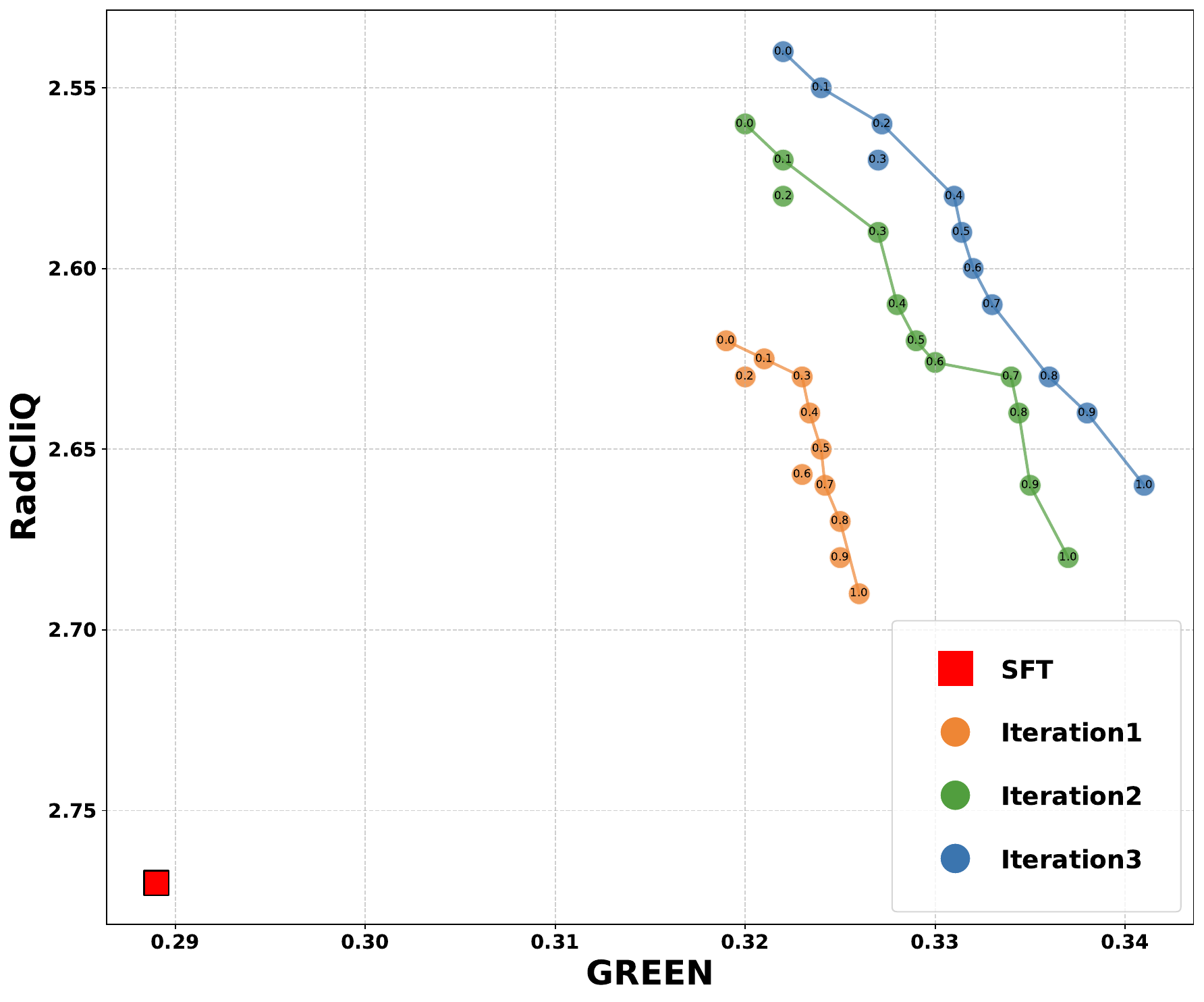}
    \caption{GREEN and RadCliQ (reverse)}
    \label{fig:second_image}
  \end{subfigure}
  \caption{The multi-objective alignment fronts across three iterations.}
  \label{fig_fronts}
\end{figure*}

\begin{table*}[!ht]
    \renewcommand{\arraystretch}{1.0}
    \centering
    \scalebox{0.9}{
    \begin{tabular}{c|ccc|cccc|cc|c}
        \hline
        Model & $w_1$ & $w_2$ & $w_3$ & B1 & B4 & BERTScore & RadCliQ & RadGraphF1 & ChexbertF1 & GREEN \\ 
        \hline
        SFT & - & - & - & 0.398 & 0.112 & 0.857 & 2.77 & 0.227 & 0.476 & 0.289 \\ 
        \hline
        
        \multirow{4}{*}{Iteration 1} 
        & 1 & 0 & 0 & \textcolor{blue}{\textbf{0.418}} & \textcolor{blue}{\textbf{0.119}} & \textcolor{blue}{\textbf{0.869}} & \textcolor{blue}{\textbf{2.62}} & 0.238 & 0.481 & 0.319 \\ 
        & 0 & 1 & 0 & 0.414 & 0.115 & 0.860 & 2.68 & \textcolor{blue}{\textbf{0.247}} & \textcolor{blue}{\textbf{0.499}} & 0.320 \\ 
        & 0 & 0 & 1 & 0.412 & 0.115 & 0.861 & 2.69 & 0.241 & 0.484 & \textcolor{blue}{\textbf{0.326}} \\ 
        & 1/3 & 1/3 & 1/3 & \underline{0.415} & \underline{0.117} & \underline{0.865} & \underline{2.65} & \underline{0.244} & \underline{0.491} & \underline{0.323} \\ 
        \hline
        
        \multirow{4}{*}{Iteration 2} 
        & 1 & 0 & 0 & \textcolor{blue}{\textbf{0.426}} & \textcolor{blue}{\textbf{0.125}} & \textcolor{blue}{\textbf{0.879}} & \textcolor{blue}{\textbf{2.56}} & 0.242 & 0.483 & 0.320 \\ 
        & 0 & 1 & 0 & 0.417 & 0.116 & 0.864 & 2.67 & \textcolor{blue}{\textbf{0.266}} & \textcolor{blue}{\textbf{0.505}} & 0.322 \\ 
        & 0 & 0 & 1 & 0.415 & 0.115 & 0.866 & 2.68 & 0.245 & 0.486 & \textcolor{blue}{\textbf{0.337}} \\ 
        & 1/3 & 1/3 & 1/3 & \underline{0.419} & \underline{0.119} & \underline{0.874} & \underline{2.63} & \underline{0.251} & \underline{0.494} & \underline{0.325} \\ 
        \hline
        
        \multirow{4}{*}{Iteration 3} 
        & 1 & 0 & 0 & \textcolor{blue}{\textbf{0.428}}* & \textcolor{blue}{\textbf{0.129}}* & \textcolor{blue}{\textbf{0.885}}* & \textcolor{blue}{\textbf{2.54}}* & 0.244 & 0.486 & 0.322 \\ 
        & 0 & 1 & 0 & 0.418 & 0.117 & 0.872 & 2.65 & \textcolor{blue}{\textbf{0.273}}* & \textcolor{blue}{\textbf{0.516}}* & 0.324 \\ 
        & 0 & 0 & 1 & 0.417 & 0.116 & 0.873 & 2.66 & 0.249 & 0.484 & \textcolor{blue}{\textbf{0.341}}* \\ 
        & 1/3 & 1/3 & 1/3 & \underline{0.421} & \underline{0.121} & \underline{0.879} & \underline{2.61} & \underline{0.254} & \underline{0.504} & \underline{0.327} \\ 
        \hline
    \end{tabular}
    }
    \caption{Results on MIMIC-CXR dataset under four preference weights across three iterations, where $w_1$, $w_2$, and $w_3$ correspond to the weights of the objective of RadCliQ, RadGraphF1, and GREEN, respectively. SFT represents the results produced by the baseline model. Bold \textcolor{blue}{\textbf{blue}} denote the best results in each iteration, while \underline{underlined} indicates the second-best results. An asterisk (*) signifies the best result among all iterations.}
    \label{tab_pref_cxr}
\end{table*}

\begin{table*}[!ht]
    \renewcommand{\arraystretch}{1.0}
    \centering
    \scalebox{0.9}{
    \begin{tabular}{c|ccc|cccc|cc|c}
        \hline
        Model & $w_1$ & $w_2$ & $w_3$ & B1 & B4 & BERTScore & RadCliQ & RadGraphF1 & ChexbertF1 & GREEN \\ 
        \hline
        SFT & - & - & - & 0.401 & 0.098 & 0.871 & 2.64 & 0.274 & 0.211 & 0.457 \\ 
        \hline
        
        \multirow{4}{*}{Iteration 1} 
        & 1 & 0 & 0 & \textcolor{blue}{\textbf{0.411}} & \textcolor{blue}{\textbf{0.107}} & \textcolor{blue}{\textbf{0.879}} & \textcolor{blue}{\textbf{2.57}} & 0.277 & 0.211 & 0.461 \\ 
        & 0 & 1 & 0 & 0.402 & 0.101 & 0.871 & 2.61 & \textcolor{blue}{\textbf{0.285}} & \textcolor{blue}{\textbf{0.218}} & 0.459 \\ 
        & 0 & 0 & 1 & 0.404 & 0.102 & 0.872 & 2.64 & 0.275 & 0.212 & \textcolor{blue}{\textbf{0.482}} \\ 
        & 1/3 & 1/3 & 1/3 & \underline{0.407} & \underline{0.102} & \underline{0.874} & \underline{2.60} & \underline{0.281} & \underline{0.215} & \underline{0.467} \\ 
        \hline
        
        \multirow{4}{*}{Iteration 2} 
        & 1 & 0 & 0 & \textcolor{blue}{\textbf{0.426}} & \textcolor{blue}{\textbf{0.124}} & \textcolor{blue}{\textbf{0.885}} & \textcolor{blue}{\textbf{2.55}} & 0.279 & 0.217 & 0.465 \\ 
        & 0 & 1 & 0 & 0.405 & 0.103 & 0.874 & 2.60 & \textcolor{blue}{\textbf{0.299}} & \textcolor{blue}{\textbf{0.227}} & 0.468 \\ 
        & 0 & 0 & 1 & 0.404 & 0.102 & 0.875 & 2.61 & 0.276 & 0.214 & \textcolor{blue}{\textbf{0.513}} \\ 
        & 1/3 & 1/3 & 1/3 & \underline{0.415} & \underline{0.114} & \underline{0.879} & \underline{2.58} & \underline{0.291} & \underline{0.222} & \underline{0.484} \\ 
        \hline
        
        \multirow{4}{*}{Iteration 3} 
        & 1 & 0 & 0 & \textcolor{blue}{\textbf{0.431}}* & \textcolor{blue}{\textbf{0.131}}* & \textcolor{blue}{\textbf{0.889}}* & \textcolor{blue}{\textbf{2.51}}* & 0.282 & 0.219 & 0.481 \\ 
        & 0 & 1 & 0 & 0.411 & 0.105 & 0.877 & 2.59 & \textcolor{blue}{\textbf{0.308}}* & \textcolor{blue}{\textbf{0.232}}* & 0.479 \\ 
        & 0 & 0 & 1 & 0.409 & 0.107 & 0.880 & 2.61 & 0.280 & 0.216 & \textcolor{blue}{\textbf{0.527}}* \\ 
        & 1/3 & 1/3 & 1/3 & \underline{0.421} & \underline{0.122} & \underline{0.882} & \underline{2.56} & \underline{0.305} & \underline{0.225} & \underline{0.512} \\ 
        \hline
    \end{tabular}}
    \caption{Results on IU-Xray dataset under four preference weights across three iterations, where $w_1$, $w_2$, and $w_3$ correspond to the weights of the objective of RadCliQ, RadGraphF1, and GREEN, respectively. SFT represents the results produced by the baseline model. Bold \textcolor{blue}{\textbf{blue}} denote the best results in each iteration, while \underline{underlined} indicates the second-best results. An asterisk (*) signifies the best result among all iterations.}
    \label{tab_pref_iu}
\end{table*}

\textbf{Multi-objective alignment fronts.}
To verify the performance of multi-objective alignment among iterations, we evaluated the model with a set of sampled weights to show the Pareto fronts of the learned multi-objective policy, the results are shown in Figure~\ref{fig_fronts}, and the three paired objectives are shown in the subfigures, i.e., 
(a) RadGraphF1 vs. GREEN, (b) RadGraphF1 vs. RadCliQ (reverse), and (c) GREEN vs. RadCliQ (reverse). In each figure, the distribution of each front reveals how the model behaves under varying weight priorities, where each point's weight on the front corresponds to the weight assigned to the horizontal axis metric, ranging from 0-1. 

According to the visualization, we find that starting from the initial SFT model, the Pareto fronts of all objective pairs notably towards the upper-right quadrant in iterations. This indicates the proposed iterative data generation and alignment process ensures continuous policy improvement overall objectives, which has also been verified in our theoretical analysis. Meanwhile, in each front, the performance of both axes changes smoothly as we move along the front. 
Importantly, when the horizontal metric generally improves with its weight, the vertical metric maintains the best possible value according to the Pareto principle: there are no solutions that simultaneously improve both metrics. In the last iterations, our method exhibits extensive and continuous Pareto fronts, which demonstrate the model's enhanced ability with improved data quality after self-iterations. The final results give well-balanced Pareto-efficient solutions that cater to a wide range of objectives, representing the optimal trade-off frontier for each metric pair.

\begin{table*}[!htp]
\begin{center}\renewcommand{\arraystretch}{1.0} 
\centering\scalebox{0.9}{
 \begin{tabular}{c|c|ccccccc}\hline
        Model & Size & B1 & B4 & BERTScore & RadCliQ & RadGraphF1 & ChexbertF1 & GREEN \\ \hline
        R2Gen & 78.5M & 0.353 & 0.103 & 0.866 & 2.89 & 0.195 & 0.276 & 0.306 \\ 
        CMN & 59.1M & 0.353 & 0.108 & 0.867 & 2.87 & 0.199 & 0.278 & 0.308 \\ 
        CMN+RL & 60.8M & 0.381 & 0.109 & 0.871 & 2.83 & 0.214 & 0.292 & 0.315 \\ 
        PromptMRG & 219.9M & 0.398 & 0.112 & 0.857 & 2.77 & 0.227 & 0.476 & 0.289 \\
        MPO & 63.3M & 0.416 & \textbf{0.139} & 0.878 & 2.63 & 0.257 & 0.353 & 0.324 \\ \hline
    Ours (iteration 1) &\multirow{3}{*}{230.1M} & 0.418 & 0.119 & 0.869 & 2.62 & 0.247 & 0.499 & 0.326 \\ 
    Ours (iteration 2) &~ & \underline{0.426} & 0.125 & \underline{0.879} & 2.56 & 0.266 & \underline{0.505} & 0.337 \\ 
    Ours (iteration 3)&~ & \textbf{0.428} & \underline{0.129} & \textbf{0.885} & \underline{2.54} & \underline{0.273} & \textbf{0.516} & \underline{0.341} \\ \hline
    MedVersa    &7B	&0.28	&0.09	&0.711	&\textbf{2.45}	&\textbf{0.289}	&0.471	&\textbf{0.381} \\
    MiniGPT-Med &7B & 0.191 & 0.012 & 0.636 & 2.95 & 0.164 & 0.172 & 0.211 \\ 
        CheXagent &8B & 0.172 & 0.021 & 0.669 & 2.88 & 0.19 & 0.265 & 0.268 \\ \hline
    \end{tabular}}
\end{center}
\vspace{-1em}
\caption{Comparison with existing RRG methods on MIMIC-CXR dataset. The best results are highlighted in bold and \underline{underlined} indicates the second-best results.}\label{tab_previous}
\end{table*}

\begin{table*}[!htp]
\begin{center}\renewcommand{\arraystretch}{1.0} 
\centering\scalebox{0.9}{
 \begin{tabular}{c|c|ccccccc}\hline
        Model & Size & B1 & B4 & BERTScore & RadCliQ & RadGraphF1 & ChexbertF1 & GREEN \\ \hline
        R2Gen & 78.5M & 0.363 & 0.073 & 0.861 & 2.79 & 0.187 & 0.154 & 0.482 \\ 
        CMN & 59.1M & 0.39 & 0.085 & 0.862 & 2.75 & 0.186 & 0.155 & 0.516 \\  
        PromptMRG & 219.9M & 0.401 & 0.098 & 0.871 & 2.60 & 0.274 & 0.211 & 0.457 \\ \hline
    Ours (iteration 1) &\multirow{3}{*}{230.1M} & 0.411 & 0.107 & 0.879 & 2.57 & 0.285 & 0.218 & 0.482 \\ 
    Ours (iteration 2) &~ & \underline{0.426} & \underline{0.124} & \underline{0.885} & \underline{2.55} & \underline{0.299} & \underline{0.227} & 0.513 \\ 
    Ours (iteration 3)&~ & \textbf{0.431} & \textbf{0.131} & \textbf{0.889} & \textbf{2.51} & \textbf{0.308} & \textbf{0.232} & \textbf{0.527} \\ \hline
    MedVersa    &7B	&0.247	&0.047	&0.884	& 2.71	& 0.209	&0.217	&\underline{0.516} \\
    CheXagent &8B & 0.191 & 0.036 & 0.876 & 2.81 & 0.184 & 0.097 & 0.407 \\ \hline
    \end{tabular}}
\end{center}
\caption{Comparison with existing RRG methods on IU-Xray dataset. The best results are highlighted in \textbf{bold} and \underline{underlined} indicates the second-best results.}\label{tab_previous_iu}
\end{table*}

\textbf{Comparison with other RRG works.}
The comparison methods include traditional approaches such as R2Gen \cite{R2Gen}, CMN \cite{R2GenCMN}, CMN+RL \cite{qin2022reinforced}, PromptMRG \cite{jin2024promptmrg}, and MPO \cite{MPO}, as well as VLM-based methods like CheXagent \cite{CheXagent}, MedVersa \cite{MedVersa}, and MiniGPT-Med \cite{Minigpt-med}. The results on the MIMIC-CXR dataset are presented in Table~\ref{tab_previous}. Compared to radiology metrics, all VLM-based models perform significantly worse than traditional RRG models on NLG metrics, likely due to specialist foundation models prioritizing clinical efficiency in report generation. Our method outperforms all traditional methods in the second iteration and achieves the best performance in the third iteration.
When compared to VLM-based models, our method shows significant superiority over CheXagent and MiniGPT-Med, and its performance is comparable to that of MedVersa. This indicates that our proposed method can generate reports aligned with multi-objective preferences while also improving overall performance with a lightweight model.

We also compared our methods with existing RRG models on the IU-Xray dataset. Following the experimental setup outlined in PromptMRG, all the data in the IU-Xray dataset are used to test. 
Table~\ref{tab_previous_iu} shows the results compared with traditional approaches such as PromptMRG \cite{jin2024promptmrg} , as well as VLM-based models like CheXagent \cite{CheXagent} and MedVersa \cite{MedVersa}. Our method achieves the best performance on all metrics. Like MIMIC-CXR, all VLM-based models significantly underperform traditional RRG models in NLG metrics.


\section{Related Work}\label{section_2}
Existing RRG methods can be divided into supervised fine-tuning (SFT)-based and post-training-based methods from the view of model learning.

\textbf{SFT-based methods}. 
These methods typically improve the quality of RRG by refining different network architectures or injecting external knowledge. The former usually adopts CNN-Transformer as the basic architecture and introduces new modules, such as cross-modal memory network \cite{R2GenCMN, MAN} and memory enhancement module \cite{cao2023mmtn} to enhance cross-modal pattern learning. Another line of research focuses on incorporating external knowledge, such as knowledge graphs \cite{kale2023knowledge, KIUT}, disease tags \cite{jin2024promptmrg}, and retrieved reports \cite{liu2024bootstrapping}, into the RRG pipeline to enhance the quality of generated reports. Recently, several medical visual language models (VLMs) derived from large language models and tailored for medical applications have been proposed, such as MiniGPT-Med\cite{Minigpt-med}, MedVersa \cite{MedVersa}, and CheXagent \cite{CheXagent}, which can perform multiple medical tasks.

The aforementioned SFT-based RRG methods have made significant progress. However, due to the limited scale of high-quality annotated data, these methods have the potential risk of overfitting and cannot be generalized outside of the dataset.

\textbf{Post-training-based methods.} Post-training techniques have been applied to further improve the generation capabilities of existing RRG models through RL with reward models \cite{wang2021self, wang2022medical, qin2022reinforced, MPO} or direct preference alignment without reward models \cite{hein2024preference}. For example, \cite{wang2021self, qin2022reinforced} use off-the-shelf natural language generation (NLG) metrics such as BLEU as the reward function and maximize the average reward expectation of over the training data to improve the quality of report generation. \citet{delbrouck2022improving} apply RL to enhance report generation's factual completeness and correctness by designing semantic relevance metrics based on entity coverage and relations. MPO \cite{MPO} uses a multi-dimensional preference vector as a condition for the RRG model, and optimizes the weighted multi-dimensional reward through RL to align with human preferences. However, such an RL alignment process can still be limited to the data coverage of \citep{xiong2024iterative}. Further, \citet{hein2024preference} construct a preference dataset with wide coverage using the 8B foundation model CheXagent \cite{CheXagent} and investigates five offline direct preference alignment algorithms based on this preference dataset. Although such a method achieves strong performance, it requires a large foundation and score models, which are much more costly than other methods. 

Unlike previous post-training methods \cite{hein2024preference, MPO} in RRG, our method greatly extends the data coverage of the offline dataset via an iterative process and improves the performance via multi-objective optimization.

\section{Conclusion}

This paper proposes an OISA method for RRG, addressing the limitations of existing models that rely on fixed datasets. By employing self-generation, self-evaluation, self-alignment, and self-iteration, we have established a post-training framework that expands data quality and performs alignment gradually with multi-objectives. Theoretical results show our method leads to tight regret bound under linear reward assumptions. The experimental results highlight the potential of lightweight RRG models to deliver high-quality reports for multiple objectives and achieve Pareto-optimal alignment performance among iterations, which underscore the value of self-generated data and iterative improvements in adapting to diverse clinical needs.

\section{Limitations}
Due to computing constraints, we only focus on a single traditional RRG model, other models with different sizes and architectures would be promising to explore in the future. Further, we adopt the existing evaluation metrics of RRG model to represent user preferences, which may not be consistent with the actual needs of clinicians. It would be important to define fine-grained and informative metrics to construct the preference dataset and perform multi-objective alignment in future works. 

\section{Acknowledgment}
This work is supported by the National Natural Science Foundation of China under grants No. 62306115 and No. 62476087.
\bibliography{custom}

\clearpage
\appendix
\onecolumn

\section{Deduplication Rules in Self-Generation}\label{appendix:pdc}
The deduplication rules are as follows:

(\romannumeral1) \textbf{Patient level.} For reports from the same patient but different views, we calculate the NLG metric (BERTScore \cite{BERTScore}) and only retain the report with the higher BERTScore value. From the original training set of 227,835 reports, we retain 130,534 reports.

(\romannumeral2) \textbf{Disease label level.} We then group the reports obtained in (\romannumeral1) according to the 14 disease labels extracted by CheXbert~\cite{CheXbert}, resulting in 579 groups, with the largest group containing 16,549 reports and the smallest group containing 1 report. Within each group, where groups with more than 2 reports are further deduplicated as follows:

1) Discard reports with a BERTScore value below 0.5.

2) Compute the BERTScore for every pair of reports in the group. If the BERTScore value exceeds the threshold of 0.8, indicating that the two reports are very similar, we only keep the report with a higher BERTScore value compared to the ground truth report.
The choice of the threshold is empirical. A lower threshold may result in removing too many dissimilar reports, while a higher threshold may result in discarding more relatively similar reports, leading to a decrease in diversity.

After this second deduplication, the total number of reports is reduced to 98,753, with the largest group containing 11,324 reports, while the smallest group remains 1 report.

\section{How to calculate $K_c$ in Self-Evaluation}\label{appendix:KC}

For each preference dimension $k$, we obtain a candidate report set $\mathbf{Y} =\{\mathbf{Y}^c\}_{c=1}^{C}$ via the Self-Generation process, where $C$ is the total number of groups, $\mathbf{Y}^c=\{(x_j, y_j)\}_{j=1}^{N_c}$, and $N_c$ is the total number of samples in $c$-th group. Within each group, we rank the reports based on the metric \( M_k \). The total number of pairs in each preference dataset is $K$. We denote the number of reports to be sampled as $K_r$ and the number of groups with more than one report as $Q$;
We then perform sampling within each group according to the following rules.
\begin{itemize}
        \item Calculate the average number of samples to be sampled for each group as $\mu = \left\lceil \frac{K_r}{Q} \right\rceil $.
        \item For groups with fewer than $ \mu $ reports, we sample all reports within those groups, i.e., $K_c=N_c$, resulting in a total of \( K_1 \) reports.
        \item  For groups with more than \( \mu \) reports, we first sample $K_c = \mu $ from each group, resulting in a total of $K_2$ reports.
        \item Update the number of reports that remain to be selected as: $ K_r \leftarrow (K_r- K_1 -K_2)$, the number of groups with more than one report $Q$. Repeat the above process until \( K_r = 0 \).
\end{itemize}

\section{Proof of Theorem~\ref{thm:subop_gap}}\label{appendix:proof_of_subop_gap}
To bound the sub-optimality gap of the policy derived from multi-objective direct preference optimization, we first introduce an important lemma on the sub-optimality gap bound of the policy derived from single-objective direct preference optimization.
\begin{lemma}\label{lemma:single_obj_subopt}
Given a dataset $\mathcal{D}_k$ for a specific preference dimension $k$, the MLE estimated reward model is denoted as $\hat{\theta}_k$. Denote the optimal policy w.r.t. RL tuning objective $J(\pi; \hat{\theta}_k)$ with the estimated reward model $\hat{\theta}_k$ as $\hat{\pi} = \mathop{\arg\max}_{\pi} J(\pi; \hat{\theta}_k)$, and the optimal policy w.r.t. RL tuning objective with the ground-truth reward model $\theta^\star_k$ as $\pi^\star = \mathop{\arg\max}_{\pi} J(\pi; \theta^\star_k)$. For any $\lambda > 0$, $\beta > 0$, with probability at least $1 - \delta$, $\hat{\pi}$ satisfies
\begin{align}
    \text{\rm SubOpt}(\hat{\pi}) \leq 2\varrho \cdot \| \mathbb{E}_{x \sim \rho} [ \phi(x, \pi^\star(x)) ] \|_{(\Sigma_{\mathcal{D}_k} + \lambda I)^{-1}}. \label{equ:single_dpo_subopt}
\end{align}
\end{lemma}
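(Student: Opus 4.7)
The plan is to reduce $\mathrm{SubOpt}(\hat{\pi})$ to the reward-estimation error already controlled by Lemma~\ref{lemma:bound_MLE}, using a classical reward-decomposition argument tailored to the linear reward setting of Assumption~\ref{assump:linear_r}. The key structural observation is that under a linear reward the objective $J(\pi; \theta)$ is \emph{affine} in $\theta$ for any fixed $\pi$, since the KL-regularization term does not depend on $\theta$. Consequently, for any pair of parameters $\theta_1, \theta_2$,
\begin{equation*}
J(\pi; \theta_1) - J(\pi; \theta_2) = \langle \theta_1 - \theta_2,\ \mathbb{E}_{x \sim \rho,\, y \sim \pi}[\phi(x, y)] \rangle,
\end{equation*}
and this identity will be the workhorse throughout.

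Using it, I would insert $\pm J(\pi^\star; \hat{\theta}_k) \pm J(\hat{\pi}; \hat{\theta}_k)$ into $\mathrm{SubOpt}(\hat{\pi}) = J(\pi^\star; \theta^\star_k) - J(\hat{\pi}; \theta^\star_k)$ to produce a three-term decomposition. The middle term $J(\pi^\star; \hat{\theta}_k) - J(\hat{\pi}; \hat{\theta}_k)$ is $\leq 0$ by optimality of $\hat{\pi}$ for the estimated objective, and the remaining two terms rewrite, via the linearity identity above, as $\langle \theta^\star_k - \hat{\theta}_k,\ \mathbb{E}_{\pi^\star}[\phi]\rangle$ and $\langle \hat{\theta}_k - \theta^\star_k,\ \mathbb{E}_{\hat{\pi}}[\phi]\rangle$ respectively. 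A Cauchy--Schwarz step in the $\Sigma_{\mathcal{D}_k} + \lambda I$ norm (and its dual on the covariate factor) then separates reward error from coverage, and invoking Lemma~\ref{lemma:bound_MLE} replaces $\|\hat{\theta}_k - \theta^\star_k\|_{\Sigma_{\mathcal{D}_k} + \lambda I}$ with $\varrho$ with probability at least $1 - \delta$.

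What this yields cleanly is an upper bound of the form $\varrho \cdot \big( \|\mathbb{E}_{\pi^\star}[\phi]\|_{(\Sigma_{\mathcal{D}_k} + \lambda I)^{-1}} + \|\mathbb{E}_{\hat{\pi}}[\phi]\|_{(\Sigma_{\mathcal{D}_k} + \lambda I)^{-1}} \big)$, and the main obstacle I anticipate is closing the last gap: replacing the $\hat{\pi}$-dependent coverage term by the $\pi^\star$ coverage term to recover the claimed factor of $2\varrho$. For plain MLE this absorption is not automatic; \citet{zhu2023principled} handle an analogous issue by introducing a pessimistic penalty that forces the $\hat{\pi}$ contribution to be non-positive, at the cost of doubling the coefficient on the $\pi^\star$ term. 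Here I would argue that the KL-regularized closed-form $\hat{\pi}(y|x) \propto \pi_{\mathrm{ref}}(y|x)\exp(\hat{\theta}_k^{\top}\phi(x,y)/\beta)$ plays an analogous pessimism role: because $\hat{\theta}_k$ lies in the confidence ellipsoid $\{\theta : \|\theta - \theta^\star_k\|_{\Sigma_{\mathcal{D}_k} + \lambda I} \leq \varrho\}$ with high probability, $\hat{\pi}$ cannot place significant mass on feature directions poorly covered by the data without violating that ellipsoid. Making this coverage transfer quantitative, and thereby pinning down the precise constant $2$, is the step that will require the most care.
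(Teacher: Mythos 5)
Your decomposition is exactly the paper's: insert $\pm J(\pi^\star;\hat{\theta}_k)\pm J(\hat{\pi};\hat{\theta}_k)$, kill the middle term by optimality of $\hat{\pi}$, rewrite the other two via linearity of the reward in $\theta$ (the KL term cancels), apply Cauchy--Schwarz in the $\Sigma_{\mathcal{D}_k}+\lambda I$ norm and Lemma~\ref{lemma:bound_MLE}, and arrive at the intermediate bound $\varrho\cdot\bigl(\|\mathbb{E}_{\pi^\star}[\phi]\|_{(\Sigma_{\mathcal{D}_k}+\lambda I)^{-1}}+\|\mathbb{E}_{\hat{\pi}}[\phi]\|_{(\Sigma_{\mathcal{D}_k}+\lambda I)^{-1}}\bigr)$. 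Where you diverge is the last step, and your instinct that it is the delicate one is well founded: the paper does \emph{not} use a pessimism argument or the closed form of $\hat{\pi}$ there. It simply asserts that $\|\mathbb{E}_{x\sim\rho}[\phi(x,\hat{\pi}(x))]\|_{(\Sigma_{\mathcal{D}_k}+\lambda I)^{-1}}\leq\|\mathbb{E}_{x\sim\rho}[\phi(x,\pi^\star(x))]\|_{(\Sigma_{\mathcal{D}_k}+\lambda I)^{-1}}$ ``generally holds,'' on the informal grounds that $\mathcal{D}_k$ in each iteration is generated by the previous iteration's policy, so the optimized policy's feature expectation overlaps more with the data than the optimal policy's does; the factor of $2$ then follows by replacing the $\hat{\pi}$ term with the $\pi^\star$ term. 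So the paper's closure of the gap is a heuristic coverage assumption rather than a proof, and your proposed route (making $\hat{\pi}$ provably data-supported via the confidence ellipsoid) would be strictly more rigorous than what the paper actually does --- but as you note, plain MLE without an explicit pessimistic penalty does not automatically give that absorption, which is precisely why \citet{zhu2023principled} introduce pessimism. Your proof as written is therefore correct up to the same unproven step the paper glosses over.
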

\begin{proof}[Proof of Lemma~\ref{lemma:single_obj_subopt}]
We first decompose the sub-optimality gap defined in Eqn. (\ref{equ:single_dpo_subopt}).
\begin{align}
    \text{\rm SubOpt}(\hat{\pi}) &= J(\pi^\star, \theta^\star_k) - J(\hat{\pi}, \theta^\star_k) \nonumber\\
    & = \mathbb{E}_{x \sim \rho, y \sim \pi^\star(y | x)} \Big[ r^\star_k(x,y) - \beta\log \frac{\pi^\star(y | x)}{\pi_{\text{ref}}(y | x)} \Big] - \mathbb{E}_{x \sim \rho, y \sim \hat{\pi}(y | x)} \Big[ r^\star_k(x,y) - \beta\log \frac{\hat{\pi}(y | x)}{\pi_{\text{ref}}(y | x)} \Big]  \nonumber\\
    & = \underbrace{\mathbb{E}_{x \sim \rho, y \sim \pi^\star(y | x)} [ r^\star_k(x,y) - \hat{r}_k (x, y)]}_{\text{Term (i)}} + \underbrace{\mathbb{E}_{x \sim \rho, y \sim \hat{\pi}(y | x)} [ \hat{r}_k (x, y) - r^\star_k (x, y)]}_{\text{Term (ii)}} + \underbrace{J(\pi^\star, \hat{\theta}_k) - J(\hat{\pi},\hat{\theta}_k)}_{\text{Term (iii)}} \label{equ:single_obj_decompose}
\end{align}

\textbf{Term (i).} Under Assumption~\ref{assump:linear_r}, we can rewrite Term (i) in Eqn. (\ref{equ:single_obj_decompose}) as
\begin{align}
    \text{Term (i)} &= \mathbb{E}_{x \sim \rho, y \sim \pi^\star(y | x)} [ (\theta^\star_k - \hat{\theta}_k)^\top \phi(x, y)] \nonumber \\
    & \leq \| \theta^\star_k - \hat{\theta}_k \|_{\Sigma_{\mathcal{D}_k} + \lambda I} \cdot \| \mathbb{E}_{x \sim \rho, y \sim \pi^\star(y | x)} [ \phi(x, y) ] \|_{(\Sigma_{\mathcal{D}_k} + \lambda I)^{-1}} \nonumber \\
    & \leq \varrho \cdot \| \mathbb{E}_{x \sim \rho, y \sim \pi^\star(y | x)} [ \phi(x, y) ] \|_{(\Sigma_{\mathcal{D}_k} + \lambda I)^{-1}}, \label{equ:term_i_bound}
\end{align}
where the first inequality results from Cauchy-Schwarz inequality, and the last inequality is obtained by using Lemma~\ref{lemma:bound_MLE}.

\textbf{Term (ii).} Akin to the derivation of Eqn. (\ref{equ:term_i_bound}), we also have
\begin{align}
    \text{Term (ii)} &= \mathbb{E}_{x \sim \rho, y \sim \hat{\pi}(y | x)} [ (\hat{\theta}_k - \theta^\star_k)^\top \phi(x, y)] \nonumber \\
    & \leq \varrho \cdot \| \mathbb{E}_{x \sim \rho, y \sim \hat{\pi}(y | x)} [ \phi(x, y) ] \|_{(\Sigma_{\mathcal{D}_k} + \lambda I)^{-1}}. \label{equ:term_ii_bound}
\end{align}

\textbf{Term (iii).} Since $\hat{\pi}$ satisfies $\hat{\pi} = \mathop{\arg\max}_\pi J(\pi, \hat{\theta}_k)$, we have
\begin{align}
    J(\pi^\star, \hat{\theta}_k) \leq J(\hat{\pi}, \hat{\theta}_k). \label{equ:term_iii_bound}
\end{align}

\textbf{Combining three terms together.} Substituting Eqs. (\ref{equ:term_i_bound}--\ref{equ:term_iii_bound}) into Eqn. (\ref{equ:single_obj_decompose}), we get
\begin{align}
    \text{\rm SubOpt}(\hat{\pi}) &\leq \varrho \cdot \big( \| \mathbb{E}_{x \sim \rho} [ \phi(x, \pi^\star(x)) ] \|_{(\Sigma_{\mathcal{D}_k} + \lambda I)^{-1}} + \| \mathbb{E}_{x \sim \rho} [ \phi(x, \hat{\pi}(x)) ] \|_{(\Sigma_{\mathcal{D}_k} + \lambda I)^{-1}} \big). \label{equ:subopt_with_two_semi_norm}
\end{align}
Here the term $\| \mathbb{E}_{x \sim \rho} [ \phi(x, \pi(x)) ] \|_{(\Sigma_{\mathcal{D}_k} + \lambda I)^{-1}}$ for any $\pi$ is equivalent to a measurement of how well the current dataset $\mathcal{D}_k$ covers the distribution of responses generated by the given policy $\pi$. Recalling that the preference dataset $\mathcal{D}_{k}^{(i)}$ in every iteration $i > 1$ is collected by the initial reference policy $\pi_{\text{ref}}$ (i.e., the resulted policy $\hat{\pi}^{(i - 1)}$ in the last iteration $i - 1$) in the iterative RLHF paradigm, the target vector $\mathbb{E}_{x \sim \rho}[ \phi(x, \hat{\pi}^{(i)}(x) ]$ induced by the optimized policy $\hat{\pi}^{(i)}$ in this iteration $i$ would overlaps more with the dataset $\mathcal{D}_{k}^{(i)}$ than that of the optimal policy $\pi^\star$. Therefore, the following inequality generally holds
\begin{align}
    \| \mathbb{E}_{x \sim \rho} [ \phi(x, \pi^\star(x)) ] \|_{(\Sigma_{\mathcal{D}_k} + \lambda I)^{-1}} \geq \| \mathbb{E}_{x \sim \rho} [ \phi(x, \hat{\pi}(x)) ] \|_{(\Sigma_{\mathcal{D}_k} + \lambda I)^{-1}} \label{equ:measurement_diff}
\end{align}
By combining Eqn. (\ref{equ:measurement_diff}) and Eqn. (\ref{equ:subopt_with_two_semi_norm}), we can conclude the proof of Eqn. (\ref{equ:single_dpo_subopt}).
\end{proof}
Now we are ready to prove our main theorem.
\begin{proof}[Proof of Theorem~\ref{thm:subop_gap}]
Given any reward $\mathbf{r} = [r_1, \ldots, r_N]^\top = [\theta_1, \ldots, \theta_N]^T \phi (x,y)$, denoting $\boldsymbol{\theta} = [\theta_1, \ldots, \theta_N] \in \mathbb{R}^{d \times N}$, the objective during RL tuning phase in multi-objective RLHF can be rewritten as
\begin{align*}
    J(\pi_{\mathbf{w}}, \boldsymbol{\theta}) &= \mathbb{E}_{x, y} \left[\mathbf{w}^\top \mathbf{r}(x, y) - \beta \log \frac{\pi_{\mathbf{w}}(y | x)}{\pi_{\text{ref}}(y | x)} \right] \\
    & = \mathbb{E}_{x, y} \left[ \mathbf{w}^\top \mathbf{r}(x, y) - \beta \cdot \mathbf{w}^\top \mathbf{1} \cdot \log \frac{\pi_{\mathbf{w}}(y | x)}{\pi_{\text{ref}}(y | x)} \right] \\
    & = \mathbf{w}^\top \mathbb{E}_{x, y} \left[  \boldsymbol{\theta}^\top \phi(x, y) -  \mathbf{1} \cdot \beta \log \frac{\pi_{\mathbf{w}}(y | x)}{\pi_{\text{ref}}(y | x)} \right].
\end{align*}
The term $\mathbb{E}_{x, y} [  \boldsymbol{\theta}^\top \phi(x, y) -  \mathbf{1} \cdot \beta \cdot \text{KL}(\pi_{\mathbf{w}}||\pi_{\text{ref}}) ]$ is a vector of which every element is equal to RL tuning objective w.r.t. the reward model from the corresponding preference dimension. Further, we can rewrite the sub-optimality gap of $\hat{\pi}_{\mathbf{w}} = \mathop{\arg\max}_{\pi} J(\pi_{\mathbf{w}}, \hat{\boldsymbol{\theta}}_{\text{MLE}})$,
\begin{align}
    \text{\rm SubOpt} (\hat{\pi}_\mathbf{w}) &= J (\pi^\star, \boldsymbol{\theta}^\star) - J (\hat{\pi}_\mathbf{w}, \boldsymbol{\theta}^\star) \nonumber\\
    & = \mathbf{w}^\top \Big(\mathbb{E}_{x, y \sim \pi^\star} \big[ {\boldsymbol{\theta}^{\star}}^\top \phi(x, y) - \mathbf{1} \cdot \beta \log \frac{\pi^\star(y|x)}{\pi_{\text{ref}}(y|x)} \big] \nonumber\\
    &\quad \quad \ - \mathbb{E}_{x, y \sim \hat{\pi}_\mathbf{w}} \big[  {\boldsymbol{\theta}^{\star}}^\top \phi(x, y) -\mathbf{1} \cdot \beta \log \frac{\hat{\pi}_\mathbf{w}(y|x)}{\pi_{\text{ref}}(y|x)} \big] \Big) \nonumber\\
    & = \sum_{k = 1}^{N} w_k \cdot \big(J(\pi^\star, \theta^\star_k) - J(\hat{\pi}_{\mathbf{w}}, \theta^\star_k)\big),\label{equ:multi_obj_decompose}
\end{align}
where $\mathbf{w} = [w_1, \ldots, w_N]^\top$. From Lemma~\ref{lemma:single_obj_subopt}, we get the bound of sub-optimality gap on every single preference dimension. Altogether we have with probability $1 - \delta$
\begin{align}
    \text{\rm SubOpt} (\hat{\pi}_\mathbf{w}) \leq 2\varrho \cdot \sum_{k=1}^N w_k\| \mathbb{E}_{x \sim \rho} [ \phi(x, \pi^\star(x)) ] \|_{(\Sigma_{\mathcal{D}_k} + \lambda I)^{-1}}.
\end{align}
\end{proof}

\section{Experiments}
\subsection{Evaluation metrics}\label{appendix:metrics}
BERTScore \cite{BERTScore} is a similarity score derived from contextualized embeddings. RadGraphF1 \cite{jain2021radgraph} and CheXbertF1 \cite{CheXbert} are clinical scores that incorporate clinically relevant dimensions by focusing on predefined pathological entities. RadCliQ \cite{yu2023evaluating} is a composite metric that linearly combines four existing metrics (i.e., BLEU, BERTScore, CheXbert embedding similarities, and RadGraph \cite{jain2021radgraph}) while learning the combination weights from human-annotated error scores to better align with human evaluations. GREEN \cite{GREEN} is specifically designed to assess model errors by prioritizing significant errors that could impact clinical decision-making.

 \subsection{Hyperparameter analysis of $\beta$}\label{appendix:hyperparameter_beta}
To test hyperparameter $\beta$, we limit our hyperparameter search to the candidate set $[0.1, 0.5, 0.8, 1.0]$ and conduct experiments in the first iteration on the MIMIC-CXR dataset. The experimental results in Table \ref{tab_hyper} show that the value of $\beta$ within a reasonable range does not significantly affect the results. Since we are primarily focused on radiology metrics, we set $\beta=0.5$ for all our experiments.

\begin{table}[!ht]
    \centering
    \scalebox{0.8}{
    \begin{tabular}{c|ccccccc}
        \hline
        $\beta$ & B1 & B4 & BERTScore & ChexBertF1 & GREEN & RadGraphF1 & RadCliQ \\
        \hline
        0.1 & 0.416 & 0.118 & 0.865 & 0.478 & 0.312 & 0.235 & 2.68 \\
        0.5 & 0.418 & \textbf{0.119} & \textbf{0.869} & \textbf{0.481} & \textbf{0.319} & \textbf{0.238} & \textbf{2.62} \\
        0.8 & \textbf{0.419} & 0.119 & 0.868 & 0.477 & 0.311 & 0.235 & 2.70 \\
        1.0 & 0.415 & 0.117 & 0.859 & 0.469 & 0.311 & 0.237 & 2.68 \\
        \hline
    \end{tabular}}
    \caption{Hyperparameter analysis of $\beta$.}
    \label{tab_hyper}
\end{table}

\subsection{Implementation details and hyperparameter list}\label{appendix:hyperparameter}

We set PromptMRG \cite{jin2024promptmrg} as our baseline model and conduct three rounds of iterations, each consisting of 60 epochs, each epoch takes about 15 minutes. Our method is implemented in PyTorch and trained on an NVIDIA 4090 GPU with 24GB of memory, using a batch size of 16 and an initial learning rate of 1e-5. We employ the Adam optimizer during training and apply beam search with a width of 3 for inference. The maximum report lengths for MIMIC-CXR and IU-Xray are 150 and 110, respectively. 
In our approach, the weight vector $\mathbf{w}$ is fused with the image features through a multi-head attention mechanism, where $\mathbf{w}$ is the query and the image features are both the key and the value. The experimental results for the other methods are obtained by running the official code with the parameters they specified in the original paper, the split and preprocess of the test dataset aligns with ours. All hyperparameters are listed in Table~\ref{tab:hyperparameters_list}.

 \begin{table}[h]
\centering\scalebox{0.8}{
\begin{tabular}{rc}\hline
\textbf{Hyperparameters}     & \textbf{Value}  \\ \hline
SFT model & PromptMRG \cite{jin2024promptmrg} \\
Max-Report-Length       & 150/110  vs. MIMIC-CXR/IU-Xray  \\
Sampling space of weight $w_i$   &\{0.2, 0.4, 0.6, 0.8, 1.0\}\\
$m$             & 227,835       \\
Preference pair $K$     & 10,000   \\
Optimizer            & Adam    \\
Learning rate        & 1e-5     \\
Group number $C$        & 579    \\
Epochs number              & 60      \\
Iterations number      & 3  \\
preference dimension $N$ & 3        \\

Batch size              & 16       \\
Num Beams                & 3           \\
$\beta$                  &  0.5 \\ \hline
\end{tabular}}\caption{Hyperparameters list}\label{tab:hyperparameters_list} 
\end{table}

\subsection{Computing cost analysis}\label{appendix:results of effciency}

Table~\ref{appendix:tab_oisa_sft_train} shows the computing cost of our baseline model (PromptMRG) in the SFT stage and iterative preference learning stage. The SFT is performed on the training set of MIMIC-CXR with 227K training samples, while preference data is constructed by ourselves with 10K samples. OISA iterates 3 rounds in total. Preference learning has the same model scale as SFT, while it has much less data (10K vs 227K), which makes it more efficient.

\begin{table*}[!htp]
\begin{center}\renewcommand{\arraystretch}{1.0} 
\small
\begin{tabular}{ccccccc}
\hline
Models               & Dataset \& size               & Batch size & Training time (h/epoch) & GPU(GB) & \#Para (M) & FLOPS(G) \\ \hline
SFT                  & MIMIC-CXR(227K) & 6          & 2.39                    & 8.85    & 219.9      & 181.58   \\
OISA (per iteration) & Preference data 10K           & 6          & 0.14                    & 10.24   & 230.1      & 188.67   \\ \hline
\end{tabular}
\caption{Comparative analysis of training time and resource utilization in SFT and our iterative stage.}\label{appendix:tab_oisa_sft_train}
\end{center}
\end{table*}

\begin{table*}[!htp]
\centering
\begin{tabular}{ccc}
\hline
Model       & Model Size & Inference Time (s/report) \\ \hline
CMN         & 59.1M      & 0.38                      \\
CMN+RL      & 60.8M      & 0.38                      \\
PromptMRG   & 219.9M     & 0.874                     \\ \hline
OISA (Ours) & 230.1M     & 0.905                     \\ \hline
MedVersa    & 7B         & 5.11                      \\
CheXagent   & 8B         & 2.3                       \\ \hline
\end{tabular}
\caption{Comparison of model size and inference time on MIMIC-CXR dataset.}\label{appendix:tab_inference_mimic}
\end{table*}

Inference time is mainly determined by model size. As shown in Table~\ref{appendix:tab_inference_mimic}, OISA shows comparable inference speeds against the SFT model PromptMRG, 0.905s vs. 0.874s, and is significantly faster than MedVersa (5.11s) and CheXagent (2.3s).

\end{document}